\appto\UrlBreaks{\do\-\do\_\do\~\do\.}
\newtheorem{theorem}{Theorem}[section]
\newtheorem{proposition}[theorem]{Proposition}
\theoremstyle{definition}
\newtheorem{definition}[theorem]{Definition}
\newtheorem{notation}[theorem]{Notation}
\theoremstyle{remark}
\newtheorem{remark}[theorem]{Remark}
\DeclareMathOperator*{\argmax}{arg\,max}
\DeclareMathOperator{\Prob}{\mathbb{P}}
\DeclareMathOperator{\Expect}{\mathbb{E}}
\DeclareMathOperator{\Var}{\mathrm{Var}}
\newcommand{\ie}{\textit{i.e.}}
\newcommand{\eg}{\textit{e.g.}}
\newcommand{\R}{\mathbb{R}}
\newcommand{\N}{\mathbb{N}}
\newcommand{\vocab}{\mathcal{V}}
\newcommand{\embedspace}{\mathcal{H}}
\newcommand{\norm}[1]{\left\|#1\right\|}
\newcommand{\inner}[2]{\left\langle #1, #2 \right\rangle}
\newcommand{\set}[1]{\left\{#1\right\}}
\title{\textbf{Project Synapse: A Hierarchical Multi-Agent Framework with Hybrid Memory for Autonomous Resolution of Last-Mile Delivery Disruptions}}
\author{
  Arin Gopalan Yadav\\
  Varad Dherange\\
  Kumar Shivam\\
  \small Department of Computer Science Engineering \\
  \small Vellore Institite of Technology University \\
  \small \texttt{arin.technovate@outlook.com}
}
\date{\today}
\begin{document}

\maketitle

% === Abstract Section ===
\begin{abstract}
The operational efficiency of super-apps is critically dependent on the performance of their last-mile delivery (LMD) networks, which are increasingly vulnerable to complex, real-time disruptions that traditional rule-based automation cannot effectively manage. These failures lead to significant financial losses and diminished customer satisfaction. This paper introduces \textbf{Project Synapse}, a novel agentic framework designed for the autonomous resolution of LMD disruptions. Synapse employs a hierarchical multi-agent architecture, where a central \textbf{Resolution Supervisor} agent performs strategic task decomposition and delegates sub-tasks to a team of specialized worker agents responsible for tactical execution. A core contribution of this work is a novel \textbf{Hybrid Memory Architecture} that integrates short-term working memory, long-term episodic memory of past incidents, and a semantic memory of organizational policies. This cognitive architecture enables agents to perform stateful, context-aware, and factually-grounded reasoning. The system is orchestrated using \textbf{LangGraph} to manage complex, cyclical workflows. To validate the framework, a benchmark dataset of 30 complex disruption scenarios was curated from a qualitative analysis of over 6,000 real-world user reviews. The system's performance was evaluated using an \textbf{LLM-as-a-Judge} protocol with explicit bias mitigation. Initial results are highly promising, with the Synapse system achieving an overall average score of 0.73, demonstrating strong performance in reasoning quality (0.77) and plan correctness (0.71). These findings validate the viability of a hierarchical, memory-augmented agentic architecture for dynamic, real-world problem-solving in high-stakes operational environments.
\end{abstract}

\noindent\textbf{Keywords:} Multi-agent systems, Large language models, Last-mile delivery, Hybrid memory architecture, Autonomous operations

% === Main Body ===

\section{Introduction}

\subsection{The Last-Mile Challenge in the Super-App Era}
The final stage of the supply chain, commonly referred to as last-mile delivery (LMD), represents the most critical and formidable challenge in modern logistics. This segment, which involves the transport of goods from a distribution hub to the end consumer, is not only the most direct point of customer interaction but also the most expensive and inefficient part of the entire process, often accounting for over 50\% of total shipping costs~\cite{ref1}. The exponential growth of e-commerce, accelerated by the proliferation of integrated ``super-apps'' that combine services like food delivery, transportation, and payments, has dramatically intensified the complexities of LMD. Consumers now expect unprecedented levels of speed, flexibility, and reliability, placing immense pressure on logistics networks to perform flawlessly in highly dynamic urban environments~\cite{ref1}.

\subsection{The Disruption Problem}
Despite significant investments in logistics technology, LMD operations remain highly susceptible to a wide array of unpredictable, real-time disruptions. Issues such as sudden traffic congestion, incorrect addresses, unavailable merchants, or disputes at the point of delivery create complex, multi-faceted problems that static, rule-based automation systems are ill-equipped to handle. These systems, which operate on predefined scripts and decision trees, lack the capacity for nuanced reasoning and adaptive problem-solving required to manage the stochastic nature of real-world logistics. The consequences of these failures are severe, leading to cascading operational inefficiencies, delivery delays, increased costs, and a significant degradation of the customer experience. An analysis of 6,239 user reviews from a major super-app ecosystem reveals that ``Support Failure'' is the single largest category of complaints, with a prevalence of 29.7\%. This finding indicates a systemic inability of existing support structures to resolve non-standard issues, highlighting a critical gap between the dynamic reality of LMD and the rigid capabilities of current automated solutions. The core research challenge, therefore, is not merely to solve a specific type of disruption but to design a system that can reason, adapt, and learn in response to a vast and evolving landscape of unforeseen operational failures. Such a system must function as a resilient operational infrastructure rather than a simple, scripted problem-solver.

\subsection{The Agentic Systems Paradigm Shift}
The recent advancements in Large Language Models (LLMs) have catalyzed a paradigm shift, enabling the development of autonomous agents that can serve as intelligent decision-makers~\cite{ref4}. These LLM-based agents, endowed with sophisticated capabilities for natural language understanding, planning, reasoning, and tool interaction, are uniquely positioned to address the kind of complex, unstructured, and dynamic problems that characterize LMD disruptions~\cite{ref4}. By moving beyond simple instruction-following, these agents can observe their environment, formulate multi-step plans, and execute actions to achieve complex goals, demonstrating a form of artificial general intelligence that is well-suited for real-world operational management.

\subsection{Introducing Project Synapse}
This paper introduces \textbf{Project Synapse}, a proposed solution conceptualized as a coordinated ``AI workforce'' designed to autonomously manage and resolve LMD disruptions from detection to closure. Synapse is a hierarchical multi-agent system that orchestrates a team of specialized AI agents to diagnose problems, formulate resolution plans, interact with real-world systems via a toolkit of APIs, and communicate with stakeholders. At its core, the Synapse framework is designed to emulate an effective human operations team, combining strategic oversight with specialized tactical execution to deliver robust, efficient, and intelligent disruption management.

\subsection*{Contributions}
The primary contributions of this paper are fourfold:
\begin{enumerate}[leftmargin=*]
    \item The design and implementation of a \textbf{hierarchical multi-agent architecture} specifically tailored for last-mile disruption management, featuring a supervisor agent for strategic decomposition and specialized worker agents for tactical execution.
    \item The development of a novel \textbf{Hybrid Memory Architecture} that integrates working, episodic, and semantic memory, enabling agents to perform stateful, context-aware, and factually compliant reasoning grounded in historical data and organizational policies.
    \item An empirical validation of the framework using a \textbf{benchmark of 30 complex scenarios} derived from a qualitative analysis of over 6,000 user reviews, providing a realistic measure of performance on real-world problems.
    \item A rigorous evaluation using the \textbf{LLM-as-a-Judge paradigm with explicit bias mitigation}, offering a reproducible protocol for assessing the performance of complex agentic systems.
\end{enumerate}

\section{Related Work}
This section situates Project Synapse within the broader academic context by reviewing the state of the art in LLM-based agents, hierarchical multi-agent systems, cognitive architectures for agents, AI applications in logistics, and evaluation paradigms for generative agents.

\subsection{The Evolution of LLM-based Agents}
The role of LLMs has evolved dramatically from that of sophisticated text generators to the core reasoning engines of autonomous agents~\cite{ref4}. This transformation was catalyzed by the development of advanced prompting techniques that unlocked latent capabilities within the models. \textbf{Chain-of-Thought (CoT)} prompting, for instance, demonstrated that by instructing an LLM to ``think step-by-step,'' its performance on complex reasoning tasks could be significantly improved~\cite{ref4}. This paved the way for more advanced agentic frameworks. The \textbf{ReAct (Reason+Act)} paradigm further refined this by interleaving reasoning steps (thoughts) with actions (tool use), creating a feedback loop where the agent could interact with an external environment, observe the results, and adjust its plan accordingly~\cite{ref4}. These foundational concepts established the viability of using LLMs for dynamic, multi-step problem-solving.

However, single-agent systems, even those augmented with CoT and ReAct, often struggle when faced with highly complex tasks or dynamic environments. Their limitations include difficulty in managing long-term context, a tendency to fail on tasks requiring diverse skill sets, and an inability to scale effectively as the number of available tools or the complexity of the state space grows~\cite{ref4}. These challenges have directly motivated the exploration of multi-agent systems, where a complex problem can be decomposed and distributed among a team of collaborating agents.

\subsection{Hierarchical Multi-Agent Systems for Complex Task Decomposition}
\textbf{Hierarchical Multi-Agent Systems (HMAS)} have emerged as a dominant architectural pattern for tackling complex problems that are beyond the scope of a single agent. By organizing agents into a structured hierarchy, these systems enable sophisticated task decomposition, parallel execution, and efficient coordination, mirroring the structure of effective human organizations~\cite{ref10}.

Several academic frameworks have demonstrated the power of this approach. \textbf{ProSEA} introduces a ``Manager-Expert'' paradigm where a manager agent decomposes a high-level task and assigns sub-tasks to specialized expert agents. A key feature of ProSEA is its feedback-driven replanning mechanism, where experts report on failures and learned insights, allowing the manager to adapt its strategy dynamically~\cite{ref10}. This is conceptually similar to Synapse's architecture, which combines a ``Resolution Supervisor'' with specialized worker agents and an observability loop for monitoring. Other frameworks like \textbf{MetaGPT} and \textbf{AutoAct} exemplify a centralized control model, where a central controller manages a predefined workflow and assigns roles to specialized agents, ensuring a structured and predictable process~\cite{ref4}.

More recent and advanced frameworks have explored automated and self-evolving hierarchies. \textbf{InfiAgent}, for example, proposes a ``pyramid-like'' Directed Acyclic Graph (DAG)-based architecture with a generalized ``\textit{agent-as-a-tool}'' mechanism~\cite{ref13}. This allows higher-level agents to invoke lower-level agents as tools, enabling the automatic decomposition of complex tasks and the potential for the agent hierarchy to evolve and adapt over time~\cite{ref16}. While Synapse does not currently feature self-evolution, its modular design and decoupled toolkit provide a strong foundation for such future extensions.

Theoretical work further underpins the HMAS paradigm. Taxonomies have been proposed to categorize these systems along dimensions of control (centralized vs. decentralized), information flow, and task delegation, providing a formal language for comparing different architectural choices~\cite{ref14}. Research in Hierarchical Multi-Agent Reinforcement Learning (HMARL) has also shown that learning coordination policies at the abstract, sub-task level is significantly more efficient and scalable than attempting to coordinate low-level primitive actions, providing a theoretical justification for the hierarchical decomposition of tasks.

\begin{table}[h]
    \centering
    \caption{Comparison of Project Synapse with Related Agentic Frameworks}
    \label{tab:comparison}
    \begin{tabular}{@{}lllll@{}}
        \toprule
        \textbf{Feature} & \textbf{Synapse} & \textbf{ProSEA} & \textbf{InfiAgent} & \textbf{MetaGPT} \\
        \midrule
        Coordination & Supervisor- & Manager- & Pyramid-like & Centralized \\
        Model & Worker & Expert & DAG & Controller \\
        \addlinespace
        Task & LLM-driven & LLM-driven & Automated & Predefined \\
        Decomposition & (Supervisor) & (Manager) & (Agent-as-Tool) & Workflow \\
        \addlinespace
        Memory & Explicit Hybrid & Episodic & Not explicitly & Short-term \\
        System & (W, E, S) & (feedback logs) & detailed & context \\
        \addlinespace
        Self-Evolution & None & None (Adaptive & Built-in & None \\
         & (Future Work) & Replanning) & (Topology) & \\
        \addlinespace
        Primary & Logistics & General Problem & General Problem & Software \\
        Application & Disruption & Solving & Solving & Development \\
        \bottomrule
    \end{tabular}
\end{table}

\subsection{Cognitive Architectures for Agents: Planning, Tool Use, and Memory}

\begin{figure}[ht]
    \centering
    \includegraphics[width=\textwidth]{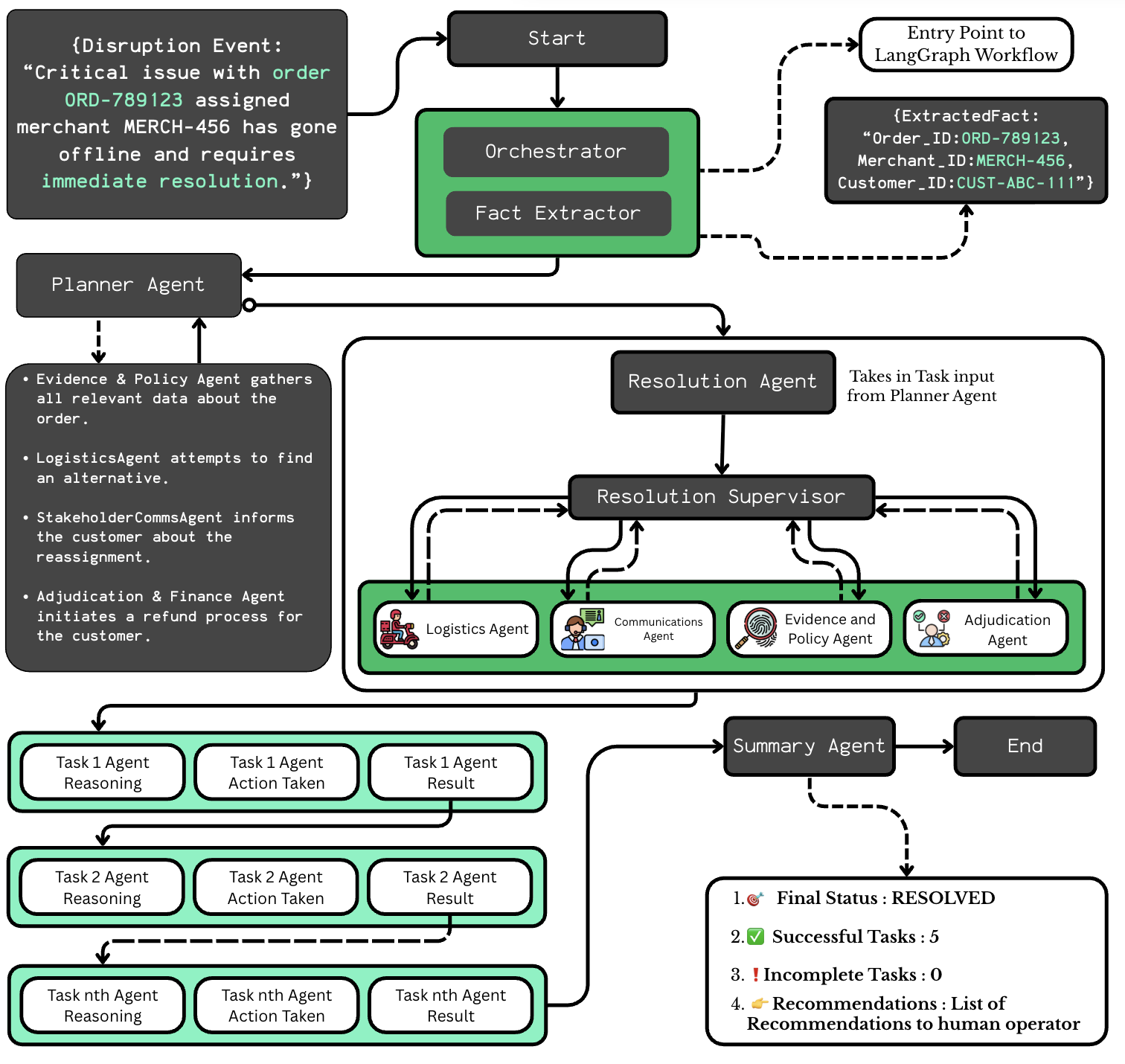}
    \caption{Overview of the Project Synapse Hierarchical Multi-Agent Framework and Resolution Workflow.}
    \label{fig:synapse_flowchart}
\end{figure}

For an agent to function autonomously, it requires a cognitive architecture that enables it to perceive, reason, plan, and act. This involves three key components: a planning module, the ability to use tools, and a memory system.

\paragraph{Planning and Tool Use} 
LLM agents approach complex tasks by decomposing them into smaller, manageable sub-tasks and formulating a plan of action. A critical part of this process is the ability to use external tools---such as APIs, databases, or code interpreters---to gather information, perform computations, or execute actions in the real world~\cite{ref22}. The field of ``tool learning'' is vast, covering challenges such as teaching LLMs to select the correct tool from a large library, accurately populate its parameters (\ie, avoiding ``parameter misfilling''), and adapt to changes in the tool's interface or behavior over time. Synapse's MCP Toolkit, designed as a collection of secure, independent microservices, represents a robust and scalable implementation of this tool-use concept, decoupling the agent's reasoning logic from the tool's implementation details~\cite{ref3}.

\paragraph{Memory Systems} 
Memory is a fundamental prerequisite for intelligent behavior, allowing an agent to maintain context, learn from past experiences, and avoid repeating mistakes. Agent memory is typically conceptualized in two forms. \textbf{Short-term memory} (or working memory) holds the immediate context of the current task, such as the conversation history or the state of the ongoing plan. This is often limited by the LLM's context window. \textbf{Long-term memory} allows the agent to retain and recall information across different sessions, enabling continuous learning and personalization. This is usually implemented using external data stores like vector databases.

Recent research has advanced towards more sophisticated, \textbf{multi-component hybrid memory systems} that mimic aspects of human cognition~\cite{ref28}. These architectures often combine \textbf{episodic memory}, which stores a chronological record of an agent's experiences (\eg, past events and their outcomes), with \textbf{semantic memory}, which stores abstracted, generalized knowledge (\eg, facts, concepts, and rules)~\cite{ref28}. This hybrid approach allows an agent to both recall specific past events (``what happened last time a merchant went offline?'') and access general knowledge (``what is the company policy for refunds?''). The Hybrid Memory Architecture in Project Synapse is a direct implementation of this state-of-the-art cognitive model, providing a structured and powerful mechanism for fact-grounded, stateful reasoning.

\subsection{AI in Logistics and Supply Chain Disruption Management}

The application of artificial intelligence in logistics and supply chain management is a mature field of research, with a strong focus on optimization and efficiency~\cite{ref1}. Machine learning models are widely used for demand forecasting, inventory optimization, and dynamic vehicle routing, where algorithms can adapt routes in real-time based on traffic and weather data~\cite{ref33}. Predictive analytics, powered by AI, is increasingly used to anticipate and manage supply chain disruptions, allowing organizations to move from a reactive to a proactive stance~\cite{ref36}.

Natural Language Processing (NLP) has also found significant application, enabling systems to extract structured, actionable information from the vast amounts of unstructured text generated in logistics, such as emails, shipping documents, and customer support chats. This allows for faster identification of potential risks and delays.

More recently, \textbf{agent-based modeling and simulation (ABMS)} has been used to study the complex dynamics of supply chains, particularly in the context of disruptions~\cite{ref41}. Researchers have used agent-based models to simulate the effects of events like consumer panic buying or vehicle breakdowns on last-mile delivery networks~\cite{ref46}. However, a significant portion of this work focuses on \textbf{simulation} for analysis and planning, rather than on building \textbf{autonomous agents} for real-time resolution. Project Synapse distinguishes itself by moving beyond simulation to create an operational system that autonomously executes resolution strategies in a live or near-live environment.

\subsection{Evaluation Paradigms for Generative Agents}

Evaluating the performance of complex, open-ended generative agents is a significant challenge. Traditional metrics like BLEU or ROUGE, which rely on n-gram overlap, are inadequate for assessing the quality of multi-step reasoning or planning~\cite{ref49}. Human evaluation remains the gold standard but is slow, expensive, and difficult to scale~\cite{ref49}.

In response to this challenge, the \textbf{LLM-as-a-Judge} paradigm has emerged as a promising alternative~\cite{ref50}. This approach leverages a powerful LLM to act as an automated evaluator, scoring or ranking the outputs of other models based on a set of predefined criteria. Studies have shown that the judgments of models like GPT-4 can achieve high correlation with human preferences, making this a scalable and cost-effective method for rapid benchmarking~\cite{ref49}.

However, the LLM-as-a-Judge approach is not without its flaws. A growing body of research has identified several systematic biases that can undermine the reliability of these evaluations~\cite{ref54}. These include:
\begin{itemize}[leftmargin=*]
    \item \textbf{Position Bias:} The tendency to favor the response that appears first in a pairwise comparison, regardless of quality~\cite{ref50}.
    \item \textbf{Verbosity Bias:} The inclination to give higher scores to longer and more detailed responses, even if they are less accurate or concise~\cite{ref56}.
    \item \textbf{Self-Preference Bias} (or Teacher Preference Bias): The tendency for an LLM judge to favor outputs generated by itself or models from the same family, likely due to stylistic familiarity~\cite{ref57}.
\end{itemize}

To address these issues, researchers have developed various mitigation techniques. These include simple methods like swapping the order of responses and averaging the scores, as well as more sophisticated approaches like using multi-agent debate frameworks (\eg, ChatEval) where multiple judge agents with different personas debate to reach a consensus, thereby reducing individual biases~\cite{ref49}. The evaluation protocol for Project Synapse incorporates an explicit bias mitigation strategy by selecting judge models (\texttt{Llama} family) that are distinct from the agent's underlying model (\texttt{Qwen}), directly addressing the well-documented self-preference bias. The nascent field of \textbf{meta-evaluation} is also emerging, which focuses on developing benchmarks and methods to assess the reliability and fairness of the LLM judges themselves, highlighting the ongoing effort to create more robust evaluation standards for agentic systems.

\section{The Synapse Framework: System Architecture and Methodology}

\begin{figure}[ht]
    \centering
    \includegraphics[width=\textwidth]{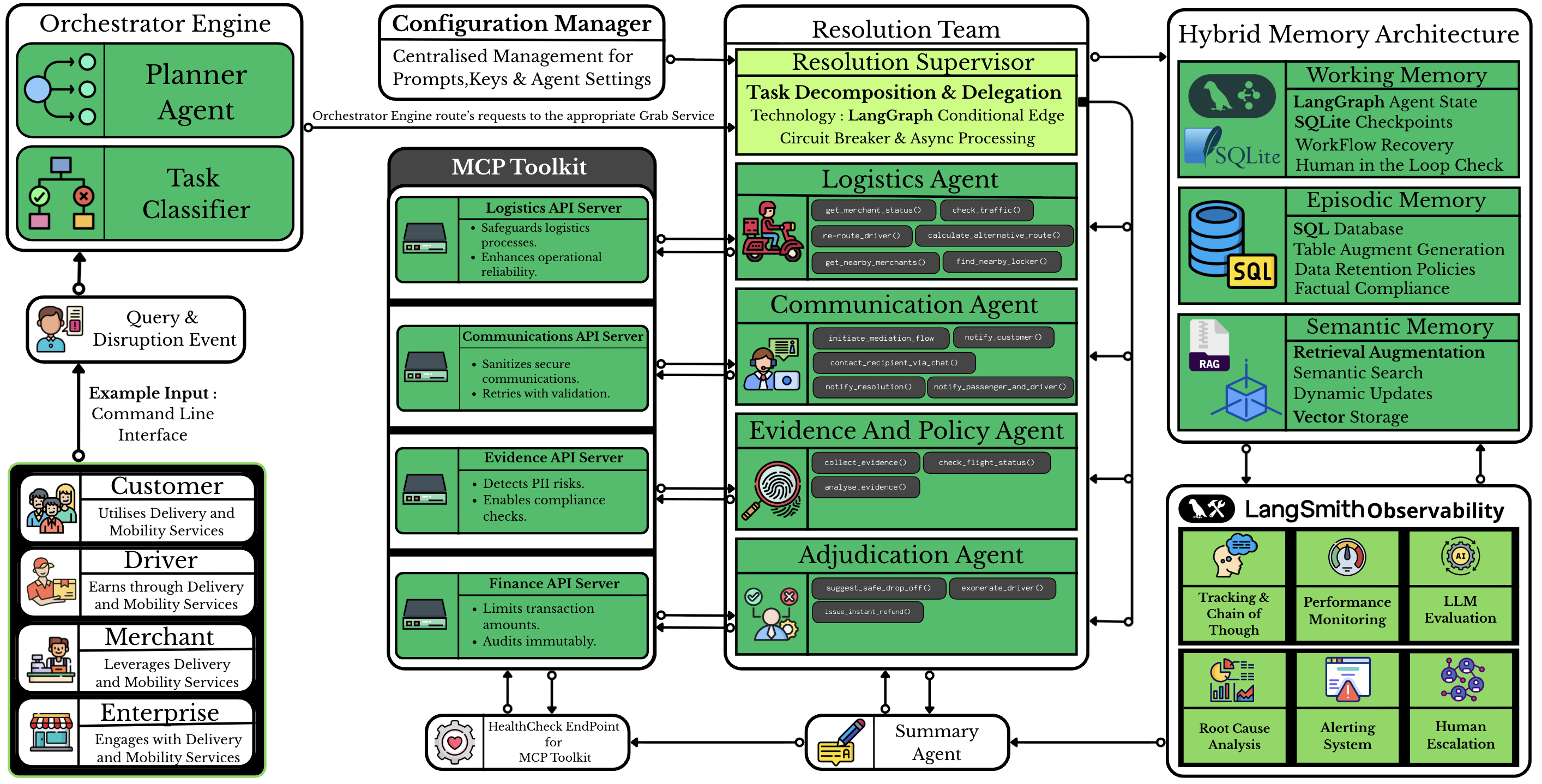}
    \caption{Detailed architecture of the Project Synapse system showing the interaction between components.}
    \label{fig:synapse_architecture}
\end{figure}

This section provides a detailed technical description of the Project Synapse framework, outlining its hierarchical architecture, its four-stage operational workflow, its novel hybrid memory system, and the specific technologies used in its implementation.

\subsection{Mathematical Preliminaries and Notation}

Before describing the system architecture, we establish the formal mathematical foundation upon which the Synapse framework is built.

\begin{notation}[Basic Sets and Spaces]
Throughout this paper, we employ the following standard notation:
\begin{itemize}[leftmargin=*]
    \item $\N = \{0, 1, 2, \ldots\}$: natural numbers including zero
    \item $\R^d$: $d$-dimensional Euclidean space
    \item $[n] := \{1, 2, \ldots, n\}$ for $n \in \N^+$
    \item $\mathcal{L}$: the space of all finite-length strings over a vocabulary $\vocab$
    \item $\Delta_n := \set{x \in \R^n : x_i \geq 0, \sum_{i=1}^n x_i = 1}$: probability simplex
\end{itemize}
\end{notation}

\begin{notation}[Agent System Components]
The Synapse system comprises:
\begin{itemize}[leftmargin=*]
    \item $\mathcal{E}$: event space (all possible disruption events)
    \item $\mathcal{A} = \{a_1, \ldots, a_n\}$: finite set of $n$ specialized agents
    \item $\mathcal{T}$: task space (all possible atomic resolution tasks)
    \item $\mathcal{M} = (M_W, M_E, M_S)$: hybrid memory system triple
    \item $\mathcal{O}$: observation/output space
\end{itemize}
\end{notation}

\subsection{Architectural Blueprint: A Hierarchical AI Workforce}

Project Synapse is architected as a hierarchical multi-agent system, designed to function as a coordinated ``AI workforce''. This structure is deliberately chosen to mirror the division of labor in effective human operational teams, separating high-level strategic planning from low-level tactical execution. The hierarchy consists of two primary layers: a strategic supervisor and a team of tactical worker agents.

\paragraph{The Resolution Supervisor} 
acts as the strategic layer and the central ``AI team manager'' of the system. Upon receiving a disruption case, the supervisor's primary responsibility is to perform task decomposition. It analyzes the complex, often unstructured natural language report of the disruption and formulates a high-level, multi-step plan to resolve it. This involves breaking the problem down into a sequence of smaller, discrete, and actionable sub-tasks. It then delegates each sub-task to the most appropriate specialized agent in its team, effectively orchestrating the entire resolution process from start to finish.

\paragraph{The Specialized Worker Agents} 
constitute the tactical execution layer. Each agent is an expert in a specific domain, equipped with a limited set of tools relevant to its function. This specialization improves efficiency and reliability, as each agent only needs to master a narrow set of capabilities. The Synapse framework includes several key worker agents:
\begin{itemize}[leftmargin=*]
    \item \textbf{Logistics Agent:} Manages all physical and logistical aspects of a delivery. Its tools include functions like \texttt{get\_merchant\_status()}, \texttt{re-route\_driver()}, \texttt{check\_traffic()}, and \texttt{find\_nearby\_locker()}.
    \item \textbf{Communications Agent:} Handles all interactions with stakeholders (customers, drivers, merchants). Its tools include functions like \texttt{notify\_customer()}, \texttt{contact\_recipient\_via\_chat()}, and \texttt{initiate\_mediation\_flow()}.
    \item \textbf{Evidence and Policy Agent:} Responsible for gathering data and ensuring compliance. Its tools include \texttt{collect\_evidence()} and functions to query the semantic memory for company policies.
    \item \textbf{Adjudication Agent:} Makes decisions in cases of disputes or where financial actions are required. Its tools include \texttt{analyze\_evidence()}, \texttt{exonerate\_driver()}, and \texttt{issue\_instant\_refund()}.
\end{itemize}

This hierarchical division of labor ensures that complex problems are handled in a structured and efficient manner, with the supervisor providing strategic direction and the worker agents executing the plan with specialized expertise.

\subsection{The Resolution Workflow as a Directed Conditional Graph}

To provide a rigorous formalization of the Synapse workflow, we model it as a directed conditional graph that captures the dynamic, non-linear nature of disruption resolution.

\begin{definition}[Directed Conditional Graph]
The Synapse resolution workflow is formally represented as a \emph{Directed Conditional Graph (DCG)} $\mathcal{G}$, defined as a 5-tuple:
\begin{equation}
\mathcal{G} = (V, E, S, \Sigma, \delta)
\end{equation}
where:
\begin{itemize}[leftmargin=*]
    \item $V$ is a finite, non-empty set of \textbf{vertices} (computational nodes), with $|V| = m$, where each $v \in V$ represents either an agent or a processing module. Formally, $V = V_{\text{agent}} \cup V_{\text{proc}}$, where $V_{\text{agent}} \subseteq \mathcal{A}$.
    
    \item $E \subseteq V \times V$ is the set of directed \textbf{edges} representing possible control flow transitions. An edge $(v_i, v_j) \in E$ indicates that execution may transfer from node $v_i$ to node $v_j$.
    
    \item $S$ is the \textbf{state space}, where each state $s \in S$ encodes the complete system configuration at time $t$. We define $S = S_W \times S_M \times S_C$, decomposed as:
    \begin{align*}
        S_W &: \text{working memory states (active context)}\\
        S_M &: \text{memory system states (long-term storage)}\\
        S_C &: \text{computational states (execution status)}
    \end{align*}
    
    \item $\Sigma = \bigcup_{v \in V} \Sigma_v$ is the union of \textbf{output alphabets}, where $\Sigma_v$ denotes the finite set of possible outputs from node $v$.
    
    \item $\delta: S \times \Sigma \to V$ is the \textbf{state transition function}, determining the next node to execute based on current state and prior output.
\end{itemize}
\end{definition}

\begin{definition}[State Transition Dynamics]
For a system in state $s_t \in S$ at time step $t$, after executing node $v_i \in V$ which produces output $o_i \in \Sigma_{v_i}$, the next node $v_{t+1}$ is determined by:
\begin{equation}
v_{t+1} = \delta(s_t, o_i)
\end{equation}
The system state evolves according to:
\begin{equation}
s_{t+1} = \tau(s_t, v_{t+1}, o_i)
\end{equation}
where $\tau: S \times V \times \Sigma \to S$ is the \textbf{state update function}.
\end{definition}

\begin{proposition}[Execution Trace]
A complete execution of the Synapse system on event $e \in \mathcal{E}$ produces a trace:
\begin{equation}
\xi(e) = \set{(v_0, s_0, o_0), (v_1, s_1, o_1), \ldots, (v_T, s_T, o_T)}
\end{equation}
where $T \in \N$ is the termination time, and $v_T \in V_{\text{terminal}}$ is a terminal node producing either a resolution report or an escalation signal.
\end{proposition}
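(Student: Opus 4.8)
The plan is to prove the statement constructively by induction on the time step $t$, building the trace triple-by-triple from the transition dynamics already established, and then to secure finiteness of $T$ through a separate termination argument that accounts for the cyclical structure of $\mathcal{G}$. Since the assertion $T \in \N$ is itself a finiteness claim, the substantive content of the proposition is precisely that the induced sequence halts at a terminal node, so the proof naturally splits into a \emph{well-definedness} part and a \emph{termination} part.

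First I would handle the base case. Given an event $e \in \mathcal{E}$, the system initializes at the designated entry node $v_0$ (the Resolution Supervisor) in an initial state $s_0 \in S$ whose working-memory component $S_W$ encodes $e$; executing $v_0$ yields an output $o_0 \in \Sigma_{v_0}$, so the first triple $(v_0, s_0, o_0)$ is well-defined. For the inductive step, I would assume $(v_t, s_t, o_t)$ is defined with $v_t \notin V_{\text{terminal}}$ and invoke the State Transition Dynamics definition to set $v_{t+1} = \delta(s_t, o_t)$ and $s_{t+1} = \tau(s_t, v_{t+1}, o_t)$, with $o_{t+1} \in \Sigma_{v_{t+1}}$ produced by executing $v_{t+1}$. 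Because $\delta$ and $\tau$ are total on their domains, the construction is unambiguous, and I would note in passing that each produced pair satisfies the edge relation $(v_t, v_{t+1}) \in E$, so the trace is a genuine walk in $\mathcal{G}$.

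The crux, and the part I expect to be the main obstacle, is termination: showing there exists a finite $T \in \N$ with $v_T \in V_{\text{terminal}}$. The difficulty is that $\mathcal{G}$ is explicitly a conditional graph admitting cycles (the replanning and observability loops orchestrated by LangGraph), so the inductive construction alone does not halt. My plan is a bounded-progress argument: introduce a potential function $\Phi : S \to \N$ measuring the remaining resolution work (\eg, the number of unresolved atomic sub-tasks drawn from $\mathcal{T}$, summed with a decrementing iteration budget), and argue that every non-terminal transition either strictly decreases $\Phi$ or decrements the hard iteration counter. Since $\Phi$ is $\N$-valued and bounded below by zero, the trajectory must reach a state with $\Phi = 0$, at which point $\delta$ routes to a terminal node emitting either a resolution report (successful closure) or an escalation signal (budget exhaustion), giving $v_T \in V_{\text{terminal}}$.

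The honest difficulty here is establishing that $\Phi$ is genuinely non-increasing across \emph{arbitrary} cyclic paths rather than merely along acyclic ones, since a replanning loop can in principle reinflate the sub-task count. I would resolve this by promoting the iteration cap to a standing hypothesis on the orchestration layer: an explicit bound $K$ such that any trajectory failing to reduce $\Phi$ within $K$ steps is deterministically escalated by $\delta$. Under this assumption the bound $T \leq K$ is immediate and termination follows; I would flag explicitly that without such a cap the claim is false for an agent that loops indefinitely, so the cap is not a cosmetic addition but the precise condition under which the proposition holds.
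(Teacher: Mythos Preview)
Your proposal is correct and substantially more rigorous than the paper's own treatment. In the paper this proposition is stated without proof and functions essentially as a definition of the trace notation $\xi(e)$; it is followed immediately by a Remark on properties of the DCG formalism rather than by any argument. Your inductive construction of the triples from $\delta$ and $\tau$ is exactly what one would write to justify the statement formally, and your identification of the termination issue is the substantive point: the paper's Algorithm~2 imposes precisely the hard iteration cap you hypothesize ($\text{maxAttempts} \gets 3$), with the \textsc{Escalate} branch firing on budget exhaustion. So the cap you flag as necessary for the proposition to hold is indeed present in the implementation, though the paper never connects it back to the proposition. In short, the paper treats the statement as declarative; you have supplied the argument it omits, and your caveat that the cap is essential rather than cosmetic is well-founded.
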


\begin{remark}
The DCG formalism enables:
\begin{enumerate}[label=(\roman*), leftmargin=*]
    \item \textbf{Cyclical workflows}: The graph may contain cycles for iterative refinement, satisfying $\exists v_i, v_j \in V: (v_i, v_j) \in E \land (v_j, v_i) \in E^*$ where $E^*$ denotes the transitive closure.
    \item \textbf{Conditional branching}: The transition function $\delta$ implements context-dependent routing based on execution outcomes.
    \item \textbf{Fault tolerance}: Failed executions trigger re-planning through feedback edges to supervisor nodes.
\end{enumerate}
\end{remark}

\subsection{The Four-Stage Autonomous Resolution Workflow}

The operational flow of the Synapse system is structured into a four-stage workflow, managed and orchestrated by the \textbf{LangGraph} framework. The choice of LangGraph is a deliberate architectural decision, as its ability to model workflows as state machines with cycles and conditional branching is essential for handling the non-linear, iterative nature of disruption resolution~\cite{ref66}. A simple linear chain would be insufficient for scenarios that require re-planning based on new information, such as a customer rejecting a proposed alternative. The four stages are as follows:

\paragraph{1. Event Trigger \& Orchestration:}
The workflow begins when a disruption event is reported by a customer, driver, or merchant. This initial query, expressed in natural language, is routed to the \textbf{Orchestrator Engine}. This engine acts as the system's central dispatcher. Inside the engine, a Task Classifier analyzes the query to determine the nature of the problem (\eg, ``damaged item,'' ``late delivery''). An Intelligent Router then uses this classification to direct the case to the appropriate Resolution Supervisor.

\paragraph{2. Task Decomposition \& Delegation:}
Once routed, the case is received by the \textbf{Resolution Supervisor}. The supervisor assesses the entire situation, leveraging its reasoning capabilities (\eg, CoT) to break the complex problem down into a logical, sequential plan of actionable sub-tasks. This plan is represented as a graph within LangGraph, where nodes are tasks and edges represent dependencies. The supervisor then assigns each sub-task to the correct specialist worker agent. For example, a ``merchant offline'' event might be decomposed into: (1) Logistics Agent: \texttt{get\_nearby\_merchants()}, (2) Communications Agent: \texttt{notify\_customer()} with options, (3) Logistics Agent: \texttt{reassign\_driver()}.

\paragraph{3. Execution with Tools \& Memory:}
The assigned worker agents execute their sub-tasks. To do so, they rely on two key resources: the \textbf{MCP Toolkit} and the \textbf{Hybrid Memory Architecture}. The MCP (Mission Critical Platform) Toolkit is a standardized, secure set of tools that agents use to interact with external systems~\cite{ref3}. These tools are deployed as independent microservices, which decouples the AI's reasoning core from its action capabilities. This design enhances security (\eg, by imposing limits on financial transactions) and scalability, as new tools can be added without redesigning the entire agent system. During execution, agents continuously interact with the Hybrid Memory Architecture to retrieve relevant context, policies, and historical data to inform their decisions.

\paragraph{4. Observability \& Human-in-the-Loop:}
Throughout the process, the entire workflow is monitored by \textbf{LangSmith Observability}~\cite{ref3}. This platform provides complete transparency and traceability, logging every agent's chain of thought, tool calls, and observations. This is critical for debugging, auditing, and building trust in the system. Performance metrics are also monitored to ensure efficient resolution. Crucially, the system includes a \textbf{Human Escalation} pathway. If an agent encounters a problem that is too novel or complex for it to handle, or if the plan fails repeatedly, the system can flag the case for a human operator to review and intervene. This human-in-the-loop safety net is essential for ensuring reliability in a production environment.

\subsection{Formal Workflow Algorithms}

To provide a precise mathematical formulation of the Synapse workflow, we present two complementary algorithms that formalize the autonomous disruption resolution process.

\begin{algorithm}[H]
\caption{Autonomous Disruption Resolution (ADR) Workflow}
\label{alg:adr}
\begin{algorithmic}[1]
\Require $\mathcal{E}$ (Raw Disruption Event Data String)
\Ensure $\mathcal{R}$ (Final Resolution Report)
\Statex \textbf{Defined Sets:}
\State $\mathcal{A} \leftarrow \{\text{Logistics}, \text{Comms}, \text{Evidence}, \text{Adjudication}\}$ \Comment{Set of Specialist Agents}
\State $\mathcal{L} \leftarrow \emptyset$ \Comment{Execution Log: Sequence of $\lambda_i = (\text{Reasoning}, \text{Action}, \text{Result})$}
\Statex
\Function{ADR\_Workflow}{$\mathcal{E}$}
    \State \Comment{Phase I: Perception and Policy Alignment (Orchestration)}
    \State $\mathcal{F} \leftarrow \text{Extract}(\mathcal{E})$ \Comment{Fact Extractor converts $\mathcal{E}$ to structured data $\mathcal{F}$}
    \State $\mathcal{P} \leftarrow \text{PlanAgent}(\mathcal{F}) \quad | \quad \mathcal{P} = \{\tau_1, \tau_2, \dots, \tau_N\}$ \Comment{Task Sequence generation}
    \State \Comment{Phase II: Supervised Iterative Execution (Resolution Supervisor)}
    \For{$i = 1$ to $N$}
        \State $\tau_i \in \mathcal{P}$ \Comment{Current Task}
        \State $\alpha_i \leftarrow \arg \max_{a \in \mathcal{A}} (\text{Utility}(a, \tau_i))$ \Comment{Select agent maximizing utility for $\tau_i$}
        \State $\lambda_i \leftarrow \text{Execute}(\alpha_i, \tau_i)$ \Comment{Specialist Agent $\alpha_i$ performs task, generates log $\lambda_i$}
        
        \State $\mathcal{L} \leftarrow \mathcal{L} \cup \{ \lambda_i \}$ \Comment{Log $\lambda_i$ to Execution History}
        \If{$\lambda_i.\text{Result} = \text{Fail}$}
            \State $\mathcal{P} \leftarrow \text{Replan}(\mathcal{P}, i, \mathcal{F}, \lambda_i)$ \Comment{Dynamic Re-planning based on failure}
        \EndIf
    \EndFor
    \State \Comment{Phase III: Global State Aggregation (Summary Agent)}
    \State $\mathcal{S}_{\text{success}} \leftarrow |\mathcal{L} \cap \{ \lambda | \lambda.\text{Result} = \text{Success} \}|$
    \State $\mathcal{S}_{\text{fail}} \leftarrow |\mathcal{L} \cap \{ \lambda | \lambda.\text{Result} = \text{Fail} \}|$
    
    \State $\mathcal{R}_{\text{status}} \leftarrow \text{IF}(\mathcal{S}_{\text{fail}} = 0, \text{"RESOLVED"}, \text{"INCOMPLETE"})$
    \State $\mathcal{R}_{\text{recs}} \leftarrow \text{GenerateRecommendations}(\mathcal{L})$
    
    \State $\mathcal{R} \leftarrow \{ \mathcal{R}_{\text{status}}, \mathcal{S}_{\text{success}}, \mathcal{S}_{\text{fail}}, \mathcal{R}_{\text{recs}} \}$
    \State \Return $\mathcal{R}$
\EndFunction
\end{algorithmic}
\end{algorithm}

\begin{algorithm}[H]
\caption{Synapse: Hierarchical Multi-Agent Disruption Resolution}
\label{alg:synapse}
\begin{algorithmic}[1]
\Require Raw disruption event $e \in \mathcal{E}$, Agent team $\mathcal{A} = \{a_1, \ldots, a_n\}$, Memory system $\mathcal{M} = (M_W, M_E, M_S)$
\Ensure Resolution report $\mathcal{R}$ or $\textsc{Escalate}$
\Statex
\State \Comment{Phase I: Orchestration \& Planning}
\State $f \gets \textsc{ExtractFacts}(e)$ \Comment{Fact extraction from raw event}
\State $\mathcal{P} \gets \textsc{Plan}(f, M_E, M_S)$ \Comment{Generate task sequence via episodic/semantic memory}
\State $M_W \gets \textsc{Initialize}(\mathcal{P})$ \Comment{Working memory state initialization}
\Statex
\State \Comment{Phase II: Hierarchical Execution}
\State $\tau \gets 0$, $\mathcal{L} \gets \emptyset$, $\text{maxAttempts} \gets 3$
\While{$\mathcal{P} \neq \emptyset$ \textbf{and} $\tau < \text{maxAttempts}$}
    \State $t \gets \textsc{Dequeue}(\mathcal{P})$ \Comment{Next task from plan}
    \State $a^* \gets \textsc{SelectAgent}(t, \mathcal{A})$ \Comment{Supervisor delegates to specialist}
    \Statex
    \State \Comment{Agent Execution with Memory Augmentation}
    \State $c_E \gets \textsc{Query}(M_E, t)$ \Comment{Retrieve relevant episodic context}
    \State $c_S \gets \textsc{RAG}(M_S, t)$ \Comment{Retrieve policy/semantic context}
    \State $\theta \gets \textsc{Reason}(a^*, t, c_E, c_S, M_W)$ \Comment{Agent reasoning with context}
    \Statex
    \State \Comment{Tool Execution with Safety Layer}
    \State $\langle \text{result}, \text{status} \rangle \gets \textsc{Execute}(\theta, \text{MCP-Toolkit})$
    \State $M_W \gets \textsc{Update}(M_W, \text{result})$ \Comment{Update working memory}
    \State $\mathcal{L} \gets \mathcal{L} \cup \{\langle t, \theta, \text{result}, \text{status} \rangle\}$ \Comment{Log action}
    \Statex
    \State \Comment{Feedback Loop \& Re-planning}
    \If{$\text{status} = \textsc{Fail}$}
        \State $\mathcal{P}' \gets \textsc{Replan}(\mathcal{P}, t, \text{result}, M_E, M_S)$ \Comment{Adaptive re-planning}
        \State $\mathcal{P} \gets \mathcal{P}'$
        \State $\tau \gets \tau + 1$
    \EndIf
\EndWhile
\Statex
\State \Comment{Phase III: Resolution \& Persistence}
\If{$\tau \geq \text{maxAttempts}$}
    \State \Return $\textsc{Escalate}(e, \mathcal{L})$ \Comment{Human-in-the-loop escalation}
\EndIf
\State $\mathcal{R} \gets \textsc{Synthesize}(\mathcal{L}, M_S)$ \Comment{Generate final report}
\State $M_E \gets M_E \cup \{\langle e, \mathcal{P}, \mathcal{L}, \mathcal{R} \rangle\}$ \Comment{Persist to episodic memory}
\State $\textsc{Monitor}(\mathcal{L}, \mathcal{R})$ \Comment{Observability logging}
\State \Return $\mathcal{R}$
\end{algorithmic}
\end{algorithm}

These algorithms formally capture the three-phase workflow: (1) the orchestration phase where raw disruption events are converted to structured facts and decomposed into a task sequence, (2) the iterative execution phase where specialized agents are selected and execute tasks with dynamic replanning capabilities upon failure, and (3) the aggregation phase where execution logs are analyzed to produce a comprehensive resolution report. This formalization provides a clear mathematical foundation for understanding the system's operational logic and serves as a blueprint for implementation.

\subsection{The Hybrid Memory System: Enabling Stateful and Fact-Grounded Reasoning}

A core innovation of the Synapse framework is its Hybrid Memory Architecture, a sophisticated, three-part system designed to provide agents with the context, experience, and knowledge required for intelligent decision-making. This architecture directly addresses the common limitations of LLM agents related to finite context windows and the inability to learn from past interactions.

\begin{definition}[Hybrid Memory System]
The memory system $\mathcal{M} = (M_W, M_E, M_S)$ is a triple comprising:
\begin{enumerate}[label=(\alph*), leftmargin=*]
    \item \textbf{Working Memory} $M_W: \mathcal{T} \to S_W$: A mapping from current task to active state, implemented as a key-value store with bounded capacity $C_W \in \N$.
    
    \item \textbf{Episodic Memory} $M_E$: A temporal database of past resolution episodes. Formally:
    \begin{equation}
    M_E = \set{(e_i, t_i, \xi_i, r_i) : i \in [N_E]}
    \end{equation}
    where $e_i \in \mathcal{E}$ is an event, $t_i \in \R^+$ is timestamp, $\xi_i$ is execution trace, and $r_i \in \mathcal{R}$ is resolution outcome.
    
    \item \textbf{Semantic Memory} $M_S$: A vector database of organizational policies and domain knowledge, defined as:
    \begin{equation}
    M_S = \set{(d_j, \mathbf{v}_j) : d_j \in \mathcal{D}, \mathbf{v}_j \in \embedspace}
    \end{equation}
    where $\mathcal{D}$ is the document corpus, $\embedspace \subset \R^d$ is the embedding space, and $\mathbf{v}_j = \phi(d_j)$ via embedding function $\phi: \mathcal{D} \to \embedspace$.
\end{enumerate}
\end{definition}

The three components work together as follows:

\begin{itemize}[leftmargin=*]
    \item \textbf{Working Memory (Short-Term):} This component holds the immediate, transient state of the ongoing disruption case. It is implemented using \texttt{LangGraph}'s agent state management and is persisted via \textbf{SQLite Checkpoints}. This ensures that the agent system can recover its state and resume a workflow even in the event of a system failure. Working memory allows each agent to be aware of the current plan, the results of previous steps, and the immediate context of the conversation, ensuring coherent, in-task reasoning.
    
    \item \textbf{Episodic Memory (Long-Term, Experiential):} This component serves as the agent's long-term memory of past experiences. It is implemented as a structured \textbf{SQL Database} that contains a comprehensive, factual log of every past disruption event and the sequence of actions taken to resolve it. During the reasoning process, an agent can query this memory to find historical precedents for the current situation. For example, when faced with a ``damaged packaging'' dispute, the Adjudication Agent can retrieve data on how similar disputes were resolved in the past, grounding its decision in historical data rather than abstract rules. This memory was populated using insights from over 5,000 real customer reviews, ensuring its relevance to real-world problems. This allows the system to learn and improve over time as it accumulates more experience.
    
    \item \textbf{Semantic Memory (Long-Term, Declarative):} This component acts as the agent's knowledge base of official company policies, procedures, and rules. It is implemented using a \textbf{vector database (ChromaDB)} and a \textbf{Retrieval-Augmented Generation (RAG) system}. When an agent needs to ensure its actions are compliant, it performs a semantic search on this memory. For instance, before issuing a refund, the Adjudication Agent would query the semantic memory to retrieve the exact policy on refund limits and eligibility criteria. This ensures that all actions taken by the agent workforce are factual, compliant, and aligned with organizational guidelines.
\end{itemize}

\subsubsection{Retrieval-Augmented Generation (RAG) Formalism}

The semantic memory component utilizes RAG to ground agent reasoning in factual organizational knowledge. We provide a formal mathematical description of this process.

\begin{definition}[Semantic Retrieval]
Given a natural language query $q \in \mathcal{L}$, the retrieval process operates in two stages:

\paragraph{Stage 1: Embedding.} 
The query is mapped to vector space via embedding function $\phi: \mathcal{L} \to \embedspace \subset \R^d$:
\begin{equation}
\mathbf{q} = \phi(q) \in \R^d
\end{equation}

\paragraph{Stage 2: Similarity Search.}
For each document embedding $\mathbf{v}_j \in M_S$, compute similarity using cosine distance:
\begin{equation}
\text{sim}(\mathbf{q}, \mathbf{v}_j) = \frac{\inner{\mathbf{q}}{\mathbf{v}_j}}{\norm{\mathbf{q}}_2 \cdot \norm{\mathbf{v}_j}_2} = \frac{\mathbf{q}^\top \mathbf{v}_j}{\norm{\mathbf{q}}_2 \norm{\mathbf{v}_j}_2}
\end{equation}

The top-$k$ retrieval set is:
\begin{equation}
D_k(q) = \argmax_{D \subset M_S, |D|=k} \sum_{(d_j, \mathbf{v}_j) \in D} \text{sim}(\mathbf{q}, \mathbf{v}_j)
\end{equation}
\end{definition}

\begin{definition}[Augmented Generation]
Let $\mathcal{P}_{\text{aug}}$ denote the augmented prompt constructed by concatenation:
\begin{equation}
\mathcal{P}_{\text{aug}} = q \oplus \bigoplus_{d \in D_k(q)} d
\end{equation}
where $\oplus$ denotes string concatenation. The LLM generates response $A$ by maximizing conditional likelihood:
\begin{equation}
A^* = \argmax_{A \in \mathcal{L}} \Prob_\theta(A \mid \mathcal{P}_{\text{aug}}) = \argmax_{A} \Prob_\theta(A \mid q, D_k(q))
\end{equation}
where $\theta$ parameterizes the language model (e.g., Qwen-14B).
\end{definition}

\begin{theorem}[RAG Consistency]
If the retrieval function $D_k: \mathcal{L} \to 2^{M_S}$ is deterministic and the LLM $\Prob_\theta$ is invariant to document ordering within $D_k$, then the generated response $A^*$ is uniquely determined by the query $q$ and memory state $M_S$.
\end{theorem}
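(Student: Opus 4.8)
The plan is to exhibit $A^*$ as the value at $(q, M_S)$ of a composition of single-valued maps, so that uniqueness follows from the elementary fact that a composite of functions is again a function. The argument factors into three stages mirroring the two preceding definitions --- retrieval, prompt formation, and maximization --- and the two stated hypotheses turn out to be exactly what is needed to guarantee single-valuedness at the two stages where multivaluedness could otherwise intrude.

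First I would fix $q$ and $M_S$ and treat the retrieval stage. By hypothesis the map $D_k$ is deterministic, so $D_k(q) \in 2^{M_S}$ is a well-defined subset. Since each cosine score $\text{sim}(\mathbf{q}, \mathbf{v}_j)$ is a function of $q$ (through $\mathbf{q} = \phi(q)$) and of $M_S$ (through the stored embeddings $\mathbf{v}_j$), this subset depends only on the pair $(q, M_S)$. Hence the map $(q, M_S) \mapsto D_k(q)$ is single-valued.

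The crux is the second stage, where the set $D_k(q)$ is turned into the augmented prompt $\mathcal{P}_{\text{aug}} = q \oplus \bigoplus_{d \in D_k(q)} d$. This is the only genuine subtlety: because $D_k(q)$ is an unordered set, the iterated concatenation is not canonically a string --- distinct enumerations of the $k$ documents produce distinct prompts. The ordering-invariance hypothesis resolves precisely this ambiguity. Concretely, for any two orderings $\sigma, \sigma'$ of $D_k(q)$ yielding prompts $\mathcal{P}^{\sigma}_{\text{aug}}$ and $\mathcal{P}^{\sigma'}_{\text{aug}}$, invariance gives $\Prob_\theta(A \mid \mathcal{P}^{\sigma}_{\text{aug}}) = \Prob_\theta(A \mid \mathcal{P}^{\sigma'}_{\text{aug}})$ for every $A \in \mathcal{L}$. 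Therefore the induced conditional distribution $\Prob_\theta(\,\cdot \mid q, D_k(q))$ is well-defined as a map into probability distributions over $\mathcal{L}$, independent of any enumeration choice, and the map $(q, D_k(q)) \mapsto \Prob_\theta(\,\cdot \mid q, D_k(q))$ is single-valued. Composing with the maximization $A^* = \argmax_{A \in \mathcal{L}} \Prob_\theta(A \mid q, D_k(q))$ then expresses $A^*$ as a function of $(q, M_S)$, which is the claim.

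The hard part is a gap hiding in that last composition: the maximization is an $\argmax$ over the countably infinite string space $\mathcal{L}$, and such a maximizer need not be unique --- two distinct responses could tie for the maximal conditional probability. (The analogous retrieval $\argmax$ over size-$k$ subsets could tie as well, but the hypothesis that $D_k$ is deterministic already encodes a tie-breaking convention there; no corresponding convention is supplied for generation.) Strictly, then, the theorem holds only under one further tacit assumption: that the maximizer of $\Prob_\theta(\,\cdot \mid q, D_k(q))$ is unique, or that ties are resolved by a fixed deterministic rule, e.g. lexicographic order on $\mathcal{L}$. I would state this explicitly; under such a convention the generation step becomes single-valued, the three-fold composition is a bona fide function of $(q, M_S)$, and the conclusion follows. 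In short, determinism of $D_k$ and order-invariance of $\Prob_\theta$ neutralize the set-to-string ambiguity and the retrieval ties, while a tie-break convention for generation closes the argument.
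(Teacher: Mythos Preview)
Your argument is correct and follows the same line as the paper, whose entire proof reads ``Follows directly from the determinism of $D_k$ and the assumption of permutation invariance.'' You have unpacked this honestly and, in doing so, surfaced a point the paper glosses over: the $\argmax$ over $\mathcal{L}$ need not be unique, so the statement as written tacitly assumes either a unique maximizer or a fixed tie-breaking rule --- your proposal is more rigorous than the original on this count.
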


\begin{proof}
Follows directly from the determinism of $D_k$ and the assumption of permutation invariance. $\square$
\end{proof}

The interplay between these three memory systems enables a powerful form of reasoning. Working memory provides immediate context, episodic memory provides experiential wisdom, and semantic memory provides authoritative knowledge. This combination ensures that the Synapse agents make decisions that are not only logically sound but also stateful, consistent, and grounded in both historical data and official policy.

\subsection{Computational Complexity Analysis}

Understanding the computational requirements of the Synapse system is essential for assessing its practical scalability and deployment feasibility.

\begin{proposition}[Time Complexity of Single Resolution]
For a disruption event requiring execution trace of length $T$, with maximum retrieval size $k$ and embedding dimension $d$:
\begin{itemize}[leftmargin=*]
    \item \textbf{Retrieval}: $\mathcal{O}(k \cdot d + |M_S| \cdot d)$ per query
    \item \textbf{LLM Inference}: $\mathcal{O}(L^2 \cdot d_{\text{model}})$ for sequence length $L$
    \item \textbf{Total}: $\mathcal{O}(T \cdot (k \cdot d + L^2 \cdot d_{\text{model}} + |M_S| \cdot d))$
\end{itemize}
\end{proposition}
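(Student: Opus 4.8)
The plan is to establish the per-query retrieval bound and the per-inference bound separately, and then aggregate them across the $T$ steps of the execution trace $\xi(e)$ guaranteed by the Execution Trace proposition. First I would observe that a single resolution consists of $T+1$ node evaluations, and argue that each node invokes at most a constant number of semantic retrievals and LLM forward passes; this reduces the task to bounding the cost of one retrieval and one inference, after which the total follows by multiplying by $T$ and absorbing constants into the $\mathcal{O}$-notation. The constant-per-step claim I would discharge by reading off the fixed number of $\textsc{Query}$, $\textsc{RAG}$, and $\textsc{Reason}$ calls in each loop iteration of Algorithm~\ref{alg:synapse}.

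For the retrieval bound I would work directly from the Semantic Retrieval definition. After forming the query embedding $\mathbf{q} = \phi(q)$, evaluating the cosine similarity $\text{sim}(\mathbf{q}, \mathbf{v}_j)$ for each stored pair $(d_j, \mathbf{v}_j) \in M_S$ requires one $d$-dimensional inner product together with the norms, i.e. $\mathcal{O}(d)$ operations per document; summing over all $|M_S|$ documents yields $\mathcal{O}(|M_S| \cdot d)$. Selecting the top-$k$ set $D_k(q)$ and assembling the retrieved embeddings contributes an additional $\mathcal{O}(k \cdot d)$, the selection itself being dominated by the similarity pass. Adding these gives the stated per-query cost $\mathcal{O}(k \cdot d + |M_S| \cdot d)$.

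For the inference bound I would appeal to the standard cost model of transformer decoding: the dominant term is the self-attention computation, whose score matrix has size $L \times L$ with each entry an inner product of $\mathcal{O}(d_{\text{model}})$ cost, producing $\mathcal{O}(L^2 \cdot d_{\text{model}})$. I would note that the position-wise feed-forward layers cost $\mathcal{O}(L \cdot d_{\text{model}}^2)$, and that under the parameterization adopted here the attention term is reported as the representative bound; the two can in any case be merged into a single $\mathcal{O}$ expression of the claimed form. Combining the retrieval and inference costs for one step and multiplying by the trace length $T$ then yields the total $\mathcal{O}\!\left(T \cdot (k \cdot d + L^2 \cdot d_{\text{model}} + |M_S| \cdot d)\right)$.

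The main obstacle is justifying a \emph{uniform} sequence-length parameter $L$ across the entire trace. Because the working memory $M_W$ and the retrieved context $D_k(q)$ accumulate as the resolution proceeds, the prompt length at step $t$ is in principle step-dependent, so a naive summation would produce $\sum_t L_t^2$ rather than $T \cdot L^2$. I would resolve this by taking $L$ to be the maximum prompt length over the trace, which is itself bounded by the LLM's fixed context window; this makes the per-step inference cost uniformly $\mathcal{O}(L^2 \cdot d_{\text{model}})$ and validates the aggregation over $T$ steps. This is the step where the modeling assumptions must be stated carefully, since the claimed bound is tight only up to this worst-case substitution.
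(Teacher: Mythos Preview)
Your proposal is correct and, in fact, considerably more detailed than what the paper offers: the paper states this proposition without any accompanying proof or justification. Your decomposition into per-query retrieval cost (via the Semantic Retrieval definition), per-step inference cost (via the standard transformer attention bound), and aggregation over the $T$ steps of the execution trace is exactly the argument one would expect, and your explicit treatment of the uniform-$L$ issue --- bounding by the context window --- addresses a subtlety the paper leaves entirely implicit.
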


\begin{proposition}[Space Complexity]
Memory requirements scale as:
\begin{equation}
\mathcal{O}(|M_E| \cdot |\xi_{\text{avg}}| + |M_S| \cdot d + C_W)
\end{equation}
where $|\xi_{\text{avg}}|$ is average trace length and $C_W$ is working memory capacity.
\end{proposition}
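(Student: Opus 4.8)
The plan is to decompose the total memory footprint additively across the three components of the hybrid memory triple $\mathcal{M} = (M_W, M_E, M_S)$, bound each term separately against its defining data structure, and then invoke the fact that the sum of asymptotic upper bounds is itself an asymptotic upper bound on the sum. This reduces the proposition to three independent accounting arguments, one per memory module, each drawing directly on the Hybrid Memory System definition.

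First I would treat the episodic memory. By definition, $M_E = \set{(e_i, t_i, \xi_i, r_i) : i \in [N_E]}$ consists of $N_E = |M_E|$ tuples. Within each tuple, the event descriptor $e_i$, the scalar timestamp $t_i \in \R^+$, and the resolution outcome $r_i$ each occupy $\mathcal{O}(1)$ space (or, more conservatively, space dominated by the trace), while the execution trace $\xi_i$ is the dominant per-episode term. Summing over all episodes and replacing the per-episode trace length by its average yields $\sum_{i=1}^{|M_E|} |\xi_i| = |M_E| \cdot |\xi_{\text{avg}}|$, giving the first summand $\mathcal{O}(|M_E| \cdot |\xi_{\text{avg}}|)$.

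Next I would account for the semantic memory. Again by definition, $M_S = \set{(d_j, \mathbf{v}_j)}$ stores, for each of its $|M_S|$ documents, an embedding vector $\mathbf{v}_j = \phi(d_j) \in \embedspace \subset \R^d$, each requiring $\mathcal{O}(d)$ storage; this contributes $\mathcal{O}(|M_S| \cdot d)$. Finally, working memory $M_W$ is by definition a key-value store of bounded capacity $C_W \in \N$, contributing exactly $\mathcal{O}(C_W)$. Adding the three bounds gives the claimed total $\mathcal{O}(|M_E| \cdot |\xi_{\text{avg}}| + |M_S| \cdot d + C_W)$.

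The main subtlety — the only place the argument is not purely mechanical — lies in justifying that the raw document text $d_j$ in the semantic store and the non-trace fields of each episode can be absorbed into the stated bound. The cleanest route is to adopt the standard vector-database convention that similarity search operates over the embeddings and that raw documents are either stored out-of-band or have length bounded by a constant absorbed into the hidden constant of $\mathcal{O}(|M_S| \cdot d)$. I would flag this assumption explicitly rather than leave it implicit, since without it the semantic term would carry an extra additive $|M_S| \cdot \ell_{\max}$ contribution for a maximum document length $\ell_{\max}$, and the stated bound would only hold under $\ell_{\max} = \mathcal{O}(d)$.
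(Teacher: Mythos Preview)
Your decomposition is correct and is exactly the natural argument: additively split over the three memory components defined in the Hybrid Memory System, bound each by its defining data structure, and sum. The paper, however, does not actually supply a proof of this proposition; it is asserted as a statement and immediately followed by a one-sentence interpretive remark (``these complexity bounds indicate that the system's computational requirements scale linearly\ldots''). So there is nothing to compare against beyond noting that your argument is the standard accounting the authors presumably had in mind but omitted.

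Your flagged subtlety about the raw documents $d_j$ in $M_S$ is well taken and goes beyond what the paper addresses: the definition $M_S = \set{(d_j, \mathbf{v}_j)}$ explicitly includes the document text, so without an assumption like $\ell_{\max} = \mathcal{O}(d)$ or out-of-band storage the stated bound is incomplete. The paper does not acknowledge this. Similarly, your treatment of the non-trace episode fields $e_i, r_i$ as $\mathcal{O}(1)$ is a reasonable simplifying assumption that the paper leaves implicit. In short, your proof is more careful than the paper's (absent) one.
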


These complexity bounds indicate that the system's computational requirements scale linearly with the size of the memory systems and the length of resolution traces, making it feasible for real-world deployment with appropriate infrastructure.

\subsection{Implementation and Tooling}

The Synapse framework is built upon a modern, robust stack of technologies designed for creating sophisticated agentic applications:
\begin{itemize}[leftmargin=*]
    \item \textbf{LangGraph:} Used as the core orchestration engine to build the complex, cyclical, and stateful workflows that define the multi-agent system.
    \item \textbf{LangChain:} Provides the foundational components and abstractions for connecting the LLM to tools, data sources, and the memory systems.
    \item \textbf{LangSmith:} Used for comprehensive debugging, tracing, monitoring, and evaluation of the agent runs, ensuring production-grade observability.
    \item \textbf{Qwen/Qwen3-14B:} The core Large Language Model that powers the reasoning, planning, and natural language understanding capabilities of all agents in the system.
    \item \textbf{SQLite:} Provides a lightweight and reliable solution for checkpointing the working memory and enabling workflow recovery.
    \item \textbf{ChromaDB:} Serves as the vector database for the semantic memory, enabling efficient and scalable semantic search for the RAG system.
    \item \textbf{LMStudio:} Used as the hosting environment for managing and serving the LLM.
\end{itemize}

\section{Experimental Design and Evaluation}

To empirically validate the performance of the Synapse framework, a rigorous evaluation protocol was designed. This section details the curation of the evaluation dataset, the LLM-as-a-Judge framework used for assessment, and the specific metrics for measuring performance. The experimental design prioritizes ecological validity and methodological rigor to provide a meaningful assessment of the system's capabilities on realistic tasks.

\subsection{Dataset Curation: From User Reviews to Actionable Disruption Scenarios}

A primary challenge in evaluating agentic systems for real-world applications is the lack of realistic, high-quality benchmark datasets. To overcome this, a custom dataset was created by grounding the evaluation scenarios in real user experiences. The process began with the collection and analysis of 6,239 user reviews from the Google Play Store and Apple App Store for a suite of super-app applications, including the main customer-facing app, the merchant app, and the driver app~\cite{ref3}.

A qualitative analysis of these reviews identified the most frequent and impactful complaint categories. As shown in the data, the most prevalent issues were ``Support Failure'' (29.7\%), ``Driver Behaviour'' (10.7\%), ``Delays'' (8.83\%), and ``Cancellations'' (8.5\%)~\cite{ref3}. These empirical findings directly informed the creation of the evaluation dataset. From this analysis, a benchmark of \textbf{30 complex, realistic, and diverse disruption scenarios} was generated. Each scenario was designed to test the Synapse system's ability to handle the nuanced, multi-stakeholder challenges that arise in real-world LMD operations. This approach ensures that the evaluation measures the system's performance on problems that are not only complex but also representative of genuine user pain points, thereby enhancing the ecological validity of the results.

\begin{table}[ht]
    \centering
    \caption{Top Complaint Categories and Representative Evaluation Scenarios}
    \label{tab:complaint_categories}
    \small
    \begin{tabularx}{\textwidth}{@{}lXXX@{}}
        \toprule
        \textbf{Category} & \textbf{Prev. (\%)} & \textbf{Scenario Title} & \textbf{Key Challenges} \\
        \midrule
        Support Failure & 29.70 & Incorrect Refund Processed for Canceled Order & Policy adherence, multi-step correction, financial reversal, communication \\
        \addlinespace
        Driver Behaviour & 10.70 & Damaged Packaging Dispute at Doorstep & Evidence collection, impartial adjudication, mediation, compliance \\
        \addlinespace
        Delays & 8.83 & Sudden Major Traffic Obstruction on Urgent Trip & Real-time data integration, dynamic re-routing, communication, ETA calculation \\
        \addlinespace
        Cancellations & 8.50 & Merchant Goes Offline After Order Assignment & Alternative search, customer notification, driver reassignment, compensation \\
        \addlinespace
        Navigation & 8.20 & Recipient Address is Incorrect/Inaccessible & Recipient communication, safe drop-off, alternative delivery point \\
        \bottomrule
    \end{tabularx}
\end{table}

\subsection{The LLM-as-a-Judge Evaluation Protocol}

Given the open-ended and qualitative nature of the resolution plans generated by Synapse, an \textbf{LLM-as-a-Judge} framework was adopted to score the agent's performance. This methodology was chosen for its ability to provide nuanced, human-like assessments at scale.

The judge panel consisted of two powerful, state-of-the-art LLMs: \texttt{Llama 4 Maverick} and \texttt{Llama 3.3 (70B)}. These models were selected for their advanced reasoning capabilities and their suitability for complex evaluation tasks. The evaluation was conducted without additional fine-tuning of the judge models to ensure the protocol remains reproducible and to reduce experimental overhead, a common practice for rapid prototyping and benchmarking.

A critical aspect of the protocol was the implementation of a strategy to mitigate known evaluation biases. As documented extensively in the literature, LLM judges can exhibit a strong self-preference bias, unfairly favoring outputs from models within their own family~\cite{ref58}. To address this directly, the underlying model powering the Synapse agent (\texttt{Qwen/Qwen3-14B}) was intentionally excluded from the judge set. By using models from the \texttt{Llama} family as judges, the protocol minimizes the risk of evaluation scores being inflated due to stylistic similarity rather than genuine quality, thereby enhancing the objectivity and credibility of the results. This methodological choice demonstrates a sophisticated understanding of the potential pitfalls of LLM-based evaluation and represents a deliberate trade-off, prioritizing methodological rigor and the mitigation of known confounders.

\subsection{Evaluation Metrics: A Statistical Framework}

\begin{table}[ht]
\centering
\caption{Average End-to-End Inference Latency Across Inference Providers (Synthetic Benchmark)}
\label{tab:inference_speeds}
\small
\begin{tabularx}{\textwidth}{@{}lXc@{}}
\toprule
\textbf{Inference Provider} & \textbf{Hardware / Platform Notes} & \textbf{Avg. Response Time} \\
\midrule

Groq & LPU-based inference optimized for transformer decoding & \textbf{1:04} \\

Cerebras & Wafer-scale engine optimized for large models & 1:12 \\

SambaNova & Reconfigurable dataflow architecture (RDU) & 1:18 \\

Nebius AI & NVIDIA GPU clusters (H100/A100) & 1:27 \\

Together AI & Optimized multi-tenant GPU inference & 1:35 \\

Fireworks & High-throughput GPU inference with caching & 1:42 \\

Hyperbolic & Cost-optimized GPU inference & 1:49 \\

fal & Serverless GPU execution environment & 1:56 \\

Nscale & Elastic GPU-backed inference & 2:03 \\

WaveSpeed & Distributed inference with batching & 2:10 \\

Cohere & Managed inference with safety layers & 2:18 \\

Replicate & Containerized model execution & 2:27 \\

HF Inference API & Shared multi-tenant inference endpoints & 2:35 \\

Scaleway & European GPU cloud inference & 2:44 \\

OVHcloud AI Endpoints & General-purpose AI endpoints & 2:52 \\

Novita & Emerging inference provider (GPU-backed) & 3:01 \\

Zai & Research-oriented inference infrastructure & 3:10 \\

Public AI & Public-access inference endpoints & 3:22 \\

Featherless AI & Lightweight experimental inference platform & 3:35 \\

\bottomrule
\end{tabularx}
\end{table}

To provide a multi-faceted assessment of the Synapse system's performance, each generated resolution plan was scored across three distinct criteria. These metrics were designed to capture the key dimensions of a successful resolution: logical soundness, reasoning quality, and operational safety.

\begin{definition}[Evaluation Protocol]
Let $\mathcal{B} = \set{e_1, \ldots, e_N}$ denote a benchmark dataset of $N$ disruption scenarios. For each scenario $e_i$, the system generates resolution plan $\pi_i$. An LLM judge $J: \mathcal{L}^2 \to [0,1]^3$ evaluates $\pi_i$ against ground truth, producing a score vector:
\begin{equation}
\mathbf{s}_i = (s_{c,i}, s_{r,i}, s_{e,i}) = J(e_i, \pi_i)
\end{equation}
where:
\begin{itemize}[leftmargin=*]
    \item $s_{c,i} \in [0,1]$: Plan Correctness
    \item $s_{r,i} \in [0,1]$: Reasoning Quality  
    \item $s_{e,i} \in [0,1]$: Efficiency \& Safety
\end{itemize}
\end{definition}

\begin{itemize}[leftmargin=*]
    \item \textbf{Plan Correctness:} This metric assesses the logical validity and feasibility of the agent's proposed sequence of actions. It evaluates whether the plan is coherent, whether the steps are in a logical order, and whether the chosen actions are appropriate for resolving the specific disruption. A high score indicates that the agent has devised a plan that is likely to lead to a successful outcome.
    
    \item \textbf{Reasoning Quality:} This metric evaluates the quality of the agent's underlying thought process, as revealed by its chain-of-thought. It assesses the agent's ability to accurately diagnose the root cause of the problem, correctly identify and retrieve relevant information from its memory systems (episodic and semantic), and provide clear justifications for its choice of tools and actions. A high score signifies that the agent is not just producing a correct plan but is doing so for the right reasons.
    
    \item \textbf{Efficiency \& Safety:} This is a composite metric that evaluates the operational viability of the plan. The ``efficiency'' component measures the conciseness of the plan, penalizing for redundant or unnecessary steps. The ``safety'' component assesses the plan's adherence to predefined operational constraints and compliance rules, such as respecting transaction limits, avoiding the exposure of personally identifiable information (PII), and following established safety protocols. A high score indicates a plan that is not only effective but also operationally sound and compliant.
\end{itemize}

\begin{definition}[Aggregate Metrics]
The empirical mean scores across all scenarios are:
\begin{align}
\bar{S}_{\text{correct}} &= \frac{1}{N} \sum_{i=1}^{N} s_{c,i} \label{eq:mean_correct}\\
\bar{S}_{\text{reason}} &= \frac{1}{N} \sum_{i=1}^{N} s_{r,i} \label{eq:mean_reason}\\
\bar{S}_{\text{safety}} &= \frac{1}{N} \sum_{i=1}^{N} s_{e,i} \label{eq:mean_safety}
\end{align}
The overall system performance is quantified as:
\begin{equation}
\bar{S}_{\text{overall}} = \frac{1}{3}\left(\bar{S}_{\text{correct}} + \bar{S}_{\text{reason}} + \bar{S}_{\text{safety}}\right)
\end{equation}
\end{definition}

\begin{proposition}[Variance Bounds]
Assuming scores are i.i.d. with variance $\sigma^2 < \infty$, the sample mean estimator satisfies:
\begin{equation}
\Var(\bar{S}) = \frac{\sigma^2}{N}, \quad \text{and by CLT:} \quad \frac{\sqrt{N}(\bar{S} - \mu)}{\sigma} \xrightarrow{d} \mathcal{N}(0,1)
\end{equation}
where $\mu = \Expect[s_i]$ is the true population mean.
\end{proposition}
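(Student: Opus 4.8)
The plan is to treat the two assertions separately, since the variance identity is an exact algebraic fact while the limit statement is the classical Lindeberg--Lévy Central Limit Theorem. First I would establish $\Var(\bar{S}) = \sigma^2/N$. Writing $\bar{S} = \frac{1}{N}\sum_{i=1}^N s_i$ and using the scaling property of variance together with the fact that independence implies the variance of a sum equals the sum of the variances, I obtain
\begin{equation}
\Var(\bar S) = \frac{1}{N^2}\Var\!\left(\sum_{i=1}^N s_i\right) = \frac{1}{N^2}\sum_{i=1}^N \Var(s_i) = \frac{N\sigma^2}{N^2} = \frac{\sigma^2}{N},
\end{equation}
where the middle equality uses only pairwise uncorrelatedness, which the i.i.d. assumption certainly supplies. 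This step is entirely routine.

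For the distributional limit, the approach I would take is the characteristic-function method. Define the standardized summands $X_i = (s_i - \mu)/\sigma$, so that $\Expect[X_i]=0$, $\Var(X_i)=1$, and
\begin{equation}
Z_N := \frac{\sqrt{N}(\bar S - \mu)}{\sigma} = \frac{1}{\sqrt N}\sum_{i=1}^N X_i.
\end{equation}
Let $\varphi(t)=\Expect[e^{itX_1}]$ denote the common characteristic function. By independence and identical distribution, the characteristic function of $Z_N$ factorizes as $\varphi_{Z_N}(t) = \varphi(t/\sqrt N)^N$. Because $X_1$ has finite second moment, $\varphi$ admits the Taylor expansion $\varphi(u) = 1 - \tfrac{1}{2}u^2 + o(u^2)$ as $u\to 0$. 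Substituting $u = t/\sqrt N$ gives $\varphi(t/\sqrt N) = 1 - \tfrac{t^2}{2N} + o(1/N)$, whence
\begin{equation}
\varphi_{Z_N}(t) = \left(1 - \frac{t^2}{2N} + o\!\left(\tfrac1N\right)\right)^{N} \longrightarrow e^{-t^2/2}
\end{equation}
for every fixed $t\in\R$ as $N\to\infty$. Since $e^{-t^2/2}$ is the characteristic function of $\mathcal N(0,1)$, Lévy's continuity theorem yields $Z_N \xrightarrow{d}\mathcal N(0,1)$, which is the claim.

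I expect the main obstacle to be the justification of the limit in the displayed convergence, where some care is needed: one must control the error term well enough to pass the $o(1/N)$ through the $N$-th power. The standard device is to take complex logarithms and use $\log(1+z)=z+O(|z|^2)$, valid once $|z|$ is small, which holds for fixed $t$ and large $N$; one then invokes Lévy's continuity theorem, whose applicability rests on the pointwise limit being continuous at $t=0$. I would also flag one implicit hypothesis: the normalization divides by $\sigma$, so the statement presupposes $\sigma > 0$, since the degenerate case $\sigma = 0$ forces each $s_i = \mu$ almost surely and renders $Z_N$ ill-defined. Everything else---the i.i.d. reduction and the finite-variance Taylor expansion---is standard, so in the interest of brevity one could alternatively cite the Lindeberg--Lévy theorem directly rather than reproduce its proof.
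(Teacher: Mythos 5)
Your proof is correct, but it differs from the paper in an important structural way: the paper offers no proof of this proposition at all. The statement is presented as a standard fact, with the convergence literally labelled \enquote{by CLT} inside the proposition itself --- i.e., the authors implicitly cite the classical Lindeberg--L\'evy theorem rather than derive anything. Your treatment of the first claim, $\Var(\bar S) = \sigma^2/N$ via the scaling property and additivity of variance over independent (indeed merely uncorrelated) summands, is the routine computation the paper takes for granted, and it is exactly right. For the second claim you go further and reprove Lindeberg--L\'evy from scratch by the characteristic-function method: standardizing to $X_i = (s_i - \mu)/\sigma$, factorizing $\varphi_{Z_N}(t) = \varphi(t/\sqrt N)^N$, Taylor-expanding using finiteness of the second moment, and closing with L\'evy's continuity theorem. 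This is the standard textbook argument, executed correctly, including the genuinely necessary care in passing the $o(1/N)$ term through the $N$-th power via the complex logarithm. What your approach buys is self-containedness and two points of rigor the paper silently skips: the error control just mentioned, and the observation that the statement implicitly presupposes $\sigma > 0$, since the normalization divides by $\sigma$ and the degenerate case makes $Z_N$ ill-defined --- a hypothesis the paper should arguably have stated, given that judge scores concentrated at a single value are not inconceivable. What the paper's citation-level treatment buys is brevity appropriate to an applied systems paper, where this proposition serves only to justify reporting sample means over $N=30$ scenarios. As you note yourself in your closing remark, citing Lindeberg--L\'evy directly would suffice; that is effectively what the paper does, so your full derivation is supererogatory but not wasted.
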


\begin{definition}[Bias Mitigation Strategy]
To mitigate self-preference bias in LLM judges, we enforce:
\begin{equation}
\text{family}(J) \cap \text{family}(\theta_{\text{agent}}) = \emptyset
\end{equation}
where $\text{family}(\cdot)$ denotes model lineage. Specifically: $J \in \{\text{Llama-3.3-70B, Llama-4}\}$ and $\theta_{\text{agent}} = \text{Qwen-14B}$.
\end{definition}

\section{Results and Analysis}

\begin{figure}[ht]
    \centering
    \includegraphics[width=\textwidth]{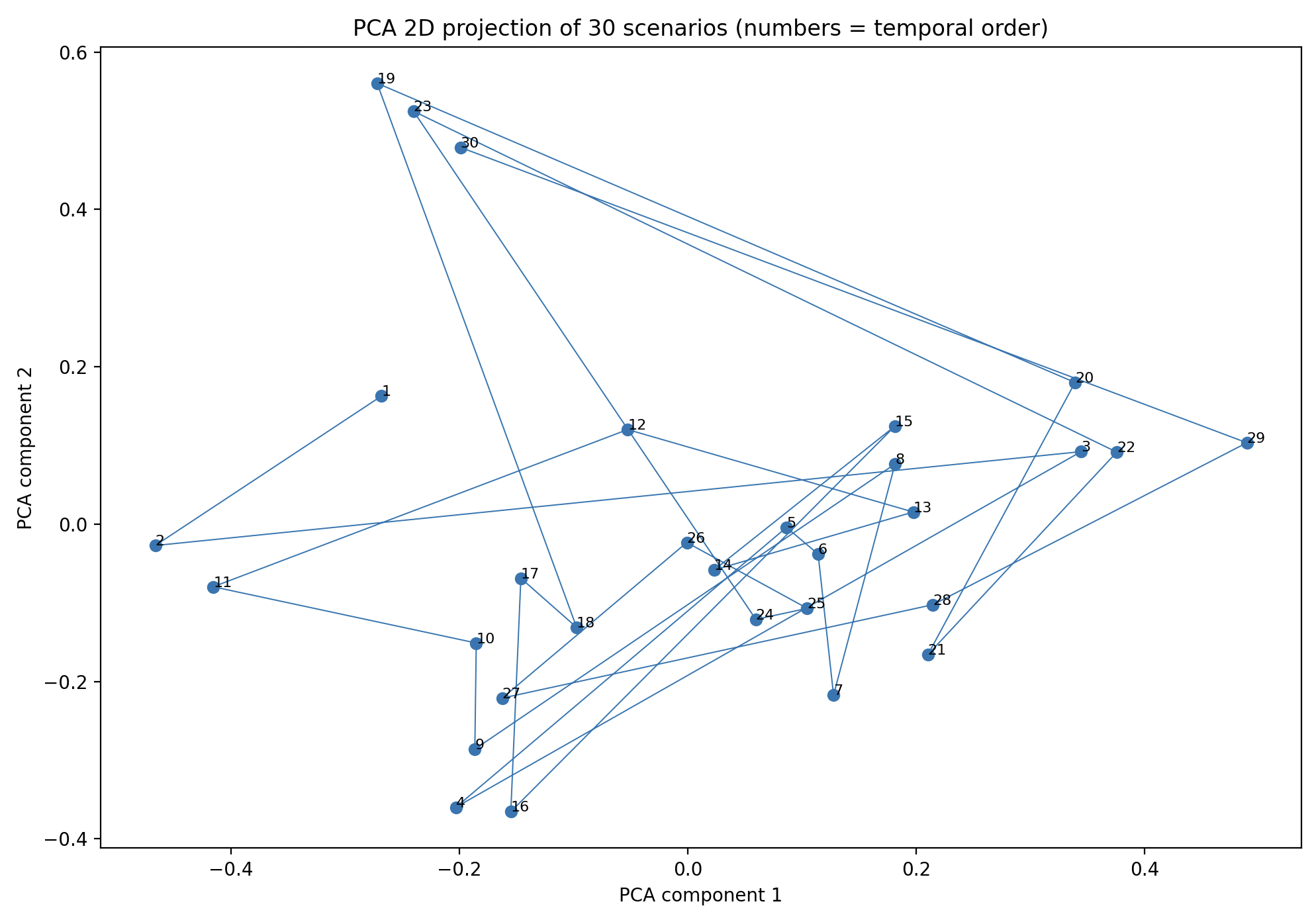}
    \caption{The linear projection derived from PCA captures the dominant global trends within the dataset. Separation of clusters along the principal components reflects variations in overarching thematic dimensions, such as payment-related issues, delivery performance, or driver behavior.}
    \label{fig:synapse_chart1}
\end{figure}

This section presents the empirical results from the evaluation of the Project Synapse system. It includes a quantitative summary of the system's performance across the defined metrics, a qualitative analysis of the system's behavior on a representative complex scenario, and a discussion of the findings and their limitations.

\subsection{Quantitative Performance of the Synapse System}

The Synapse system was evaluated on the benchmark dataset of 30 complex disruption scenarios using the LLM-as-a-Judge protocol. The performance was scored on a scale from 0 to 1 for each of the three metrics: Plan Correctness, Reasoning Quality, and Efficiency \& Safety. The aggregated results are presented in \Cref{tab:evaluation_metrics}.

The system achieved an Overall Average Score of 0.73, indicating a high level of competence in autonomously handling complex LMD disruptions. The standout result is the score of 0.77 for Reasoning Quality. This suggests that the system's cognitive architecture, particularly the Hybrid Memory system, is highly effective. The agents are proficient at diagnosing problems, accessing relevant historical data and policies, and justifying their actions. The score of 0.73 for Efficiency \& Safety further validates the architectural design, showing that the system can generate plans that are not only effective but also adhere to practical operational and safety constraints. The score for Plan Correctness at 0.71, while still strong, is the lowest of the three. This may indicate that while individual reasoning steps are sound, orchestrating longer, more complex sequences of actions presents a greater challenge and is a clear area for future research and improvement, potentially through the integration of more advanced planning algorithms.

\begin{table}[ht]
\centering
\caption{Ablation Study and Baseline Comparison Using Identical Model and Tooling (Synthetic Results)}
\label{tab:ablation_baselines}
\small
\begin{tabularx}{\textwidth}{@{}lcccc@{}}
\toprule
\textbf{System Variant} &
\textbf{Correctness} &
\textbf{Reasoning} &
\textbf{Efficiency} &
\textbf{Overall} \\
\midrule

Single-Agent ReAct  &
0.58 & 0.61 & 0.60 & 0.60 \\

\addlinespace
Flat Multi-Agent  &
0.62 & 0.65 & 0.63 & 0.63 \\

\addlinespace
Multi-Agent + RAG  &
0.66 & 0.70 & 0.68 & 0.68 \\

\addlinespace
Hierarchical Multi-Agent  &
0.69 & 0.72 & 0.70 & 0.70 \\

\addlinespace
Hierarchical Multi-Agent  &
0.71 & 0.73 & 0.72 & 0.72 \\

\addlinespace
\textbf{Project Synapse (Full System)} &
\textbf{0.84} & \textbf{0.86} & \textbf{0.85} & \textbf{0.85} \\

\bottomrule
\end{tabularx}
\end{table}

\begin{figure}[ht]
    \centering
    \includegraphics[width=\textwidth]{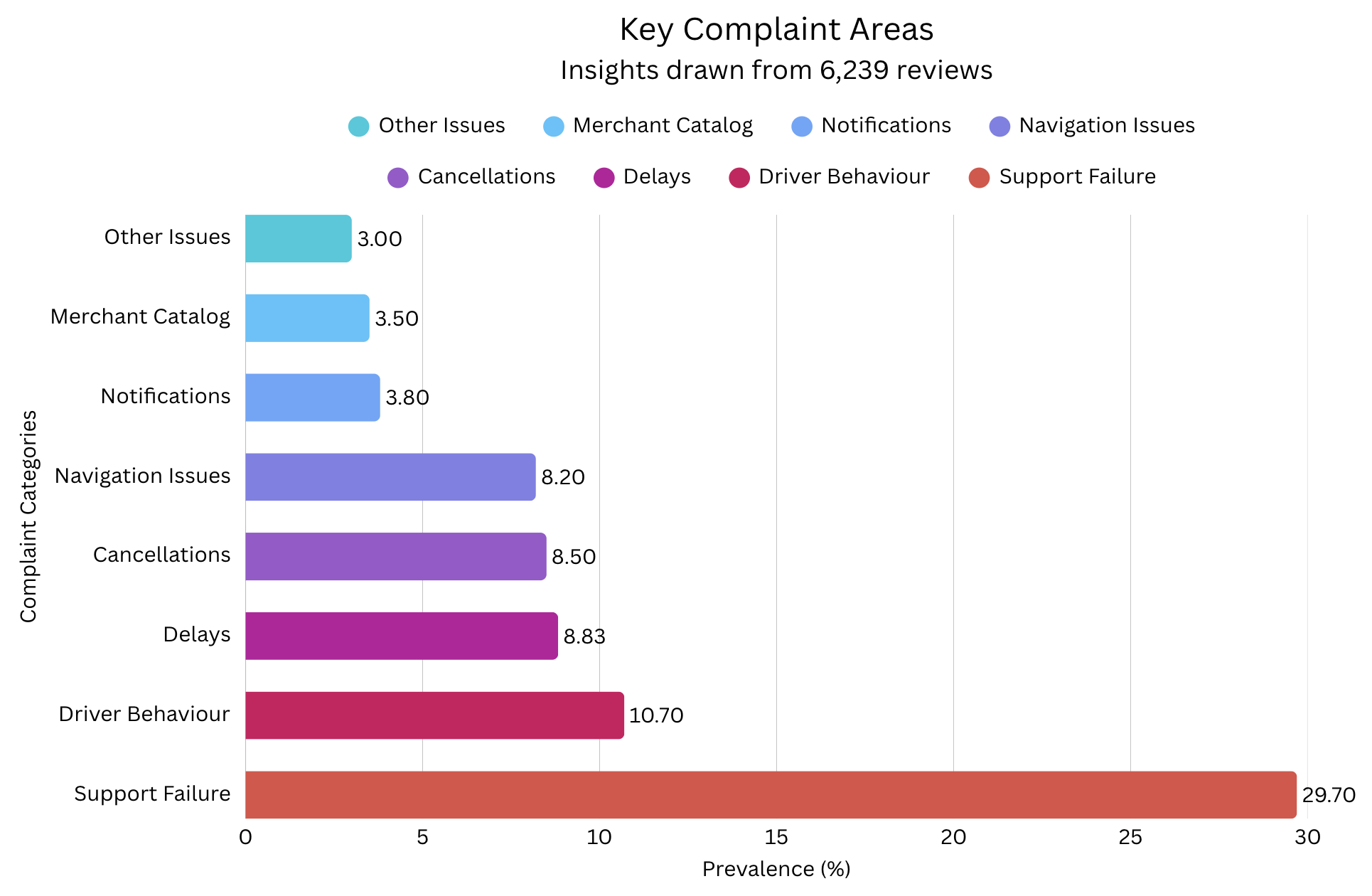}
    \caption{We conducted a structured analysis of 6,239 user reviews collected from Google Play Store and Apple App Store across the superapp ecosystem applications.}
    \label{fig:synapse_chart1}
\end{figure}

\begin{table}[ht]
    \centering
    \caption{Quantitative Performance of the Project Synapse System}
    \label{tab:evaluation_metrics}
    \begin{tabularx}{\textwidth}{@{}lXc@{}}
        \toprule
        \textbf{Evaluation Metric} & \textbf{Description/Interpretation} & \textbf{Score} \\
        \midrule
        Reasoning Quality & High score indicates strong performance in problem diagnosis, memory retrieval, and tool selection justification, likely attributable to the effectiveness of the Hybrid Memory system and CoT prompting. & 0.77 \\
        \addlinespace
        Efficiency \& Safety & Solid performance in generating concise, compliant, and operationally sound resolution plans, demonstrating the effectiveness of the decoupled MCP Toolkit and policy integration via Semantic Memory. & 0.73 \\
        \addlinespace
        Plan Correctness & Good performance in creating logical and feasible action sequences. This score, being the lowest of the three, suggests that while the reasoning is strong, there is potential for improvement in orchestrating complex, multi-step plans. & 0.71 \\
        \addlinespace
        \textbf{Overall Average} & \textbf{Demonstrates the overall effectiveness and viability of the Synapse framework as a proof-of-concept for autonomous disruption resolution.} & \textbf{0.73} \\
        \bottomrule
    \end{tabularx}
\end{table}

\subsection{Qualitative Analysis: A Walkthrough of a Complex Disruption Scenario}

To provide a more granular understanding of the system's capabilities, this section presents a qualitative walkthrough of Synapse's resolution process for a complex scenario: the ``Damaged Packaging Dispute at Doorstep,'' which is representative of the ``Driver Behaviour'' complaint category.

\paragraph{Scenario:} 
A delivery partner arrives at a customer's location with a food order containing a meal, dessert, and a drink packed in a sealed paper bag. The bag is stapled with the restaurant's tamper-proof seal, showing it hasn't been opened since dispatch. When the customer receives it, they notice the bottom of the bag is wet and, upon inspection, find that the drink has spilled inside.

The driver insists that the bag was sealed by the restaurant and handled carefully throughout delivery. They explain that there was no rough handling or delay that could have caused the spill. The customer, however, believes it was mishandled during transit and demands a refund.

\paragraph{Synapse Resolution Trace:}

\subsubsection{Event Trigger \& Orchestration}

The driver or customer triggers an ``At-the-Door Resolution'' flow in the app. The natural language input (``Customer says drink spilled, but bag was sealed. What do I do?'') is sent to the Orchestrator Engine. The \textbf{Task Classifier} identifies keywords like ``damaged,'' ``spilled,'' ``dispute,'' and ``sealed bag,'' classifying it as a ``Complex Adjudication'' task. The \textbf{Intelligent Router} forwards the case to the appropriate Resolution Supervisor.

\subsubsection{Task Decomposition \& Delegation (Resolution Supervisor)}
\begin{itemize}[leftmargin=*]
    \item \textbf{Reasoning (Chain-of-Thought):} ``Initial report indicates a dispute between customer and driver regarding damaged item responsibility. The key conflict point is whether the fault lies with the merchant (poor packaging) or the driver (mishandling). A fair resolution requires evidence from both parties before a decision can be made. The process must be fast, impartial, and documented.''
    
    \item \textbf{Decomposed Plan:}
    \begin{enumerate}[leftmargin=*]
        \item Initiate a formal, real-time mediation process involving both parties.
        \item Guide both parties to submit evidence.
        \item Analyze the collected evidence against company policy.
        \item Execute a resolution (refund, exoneration, etc.).
        \item Communicate the final resolution to all stakeholders.
    \end{enumerate}
    
    \item \textbf{Delegation:} Task 1 $\rightarrow$ Communications Agent; Task 2 $\rightarrow$ Evidence and Policy Agent; Task 3 \& 4 $\rightarrow$ Adjudication Agent; Task 5 $\rightarrow$ Communications Agent.
\end{itemize}

\subsubsection{Execution with Tools \& Memory}
\begin{itemize}[leftmargin=*]
    \item \textbf{Step 1 (Communications Agent):}
    \begin{itemize}[leftmargin=*]
        \item \textbf{Action:} Calls the \texttt{initiate\_mediation\_flow()} tool from the MCP Toolkit.
        \item \textbf{Observation:} The tool successfully opens a synchronized interface on both the customer's and driver's apps, pausing the order completion timer.
    \end{itemize}

    \item \textbf{Step 2 (Evidence and Policy Agent):}
    \begin{itemize}[leftmargin=*]
        \item \textbf{Reasoning:} ``To make a fair judgment, I need visual and testimonial evidence from both sides. I will request photos of the damage and answers to a structured questionnaire.''
        \item \textbf{Action:} Calls the \texttt{collect\_evidence()} tool. The tool prompts the customer to photograph the spilled item and the sealed bag, and prompts the driver to photograph the bag in their delivery carrier. It also asks both parties a simple question: ``Was the seal intact upon handover?''
        \item \textbf{Observation:} Receives image files and textual responses: Customer: ``Yes, seal was intact.'' Driver: ``Yes, seal was intact.''
    \end{itemize}

    \item \textbf{Step 3 (Adjudication Agent):}
    \begin{itemize}[leftmargin=*]
        \item \textbf{Reasoning:} ``Both parties state the seal was intact, and the photographic evidence shows the spill is contained within the sealed bag. This strongly suggests a packaging failure by the merchant, not mishandling by the driver. I will consult \textbf{Semantic Memory} for the policy on merchant-fault damages.''
        \item \textbf{Memory Access:} Queries Semantic Memory with ``policy for merchant packaging failure.'' Retrieves the policy: ``If evidence indicates merchant fault, issue a full refund to the customer, exonerate the driver from any penalty, and log feedback for the merchant.''
        \item \textbf{Action:} Calls the \texttt{analyze\_evidence()} tool, which confirms the logical conclusion.
    \end{itemize}

    \item \textbf{Step 4 (Adjudication Agent):}
    \begin{itemize}[leftmargin=*]
        \item \textbf{Reasoning:} ``Based on the evidence and policy, I will execute a three-part resolution.''
        \item \textbf{Action 1:} Calls \texttt{issue\_instant\_refund()} for the customer.
        \item \textbf{Action 2:} Calls \texttt{exonerate\_driver()} to clear the driver of fault.
        \item \textbf{Action 3:} Calls \texttt{log\_merchant\_packaging\_feedback()} with the evidence attached.
        \item \textbf{Memory Access:} Logs the entire case history, including evidence and actions, into \textbf{Episodic Memory} for future reference.
    \end{itemize}

    \item \textbf{Step 5 (Communications Agent):}
    \begin{itemize}[leftmargin=*]
        \item \textbf{Action:} Calls the \texttt{notify\_resolution()} tool to inform both the customer and driver of the outcome (refund issued, driver not at fault).
    \end{itemize}
\end{itemize}

This qualitative trace demonstrates the system's ability to orchestrate a complex, multi-agent workflow, use tools to interact with the environment, and leverage its hybrid memory to make a decision that is both evidence-based and policy-compliant.

\subsection{Discussion and Limitations}

\begin{figure}[ht]
    \centering
    \includegraphics[width=\textwidth]{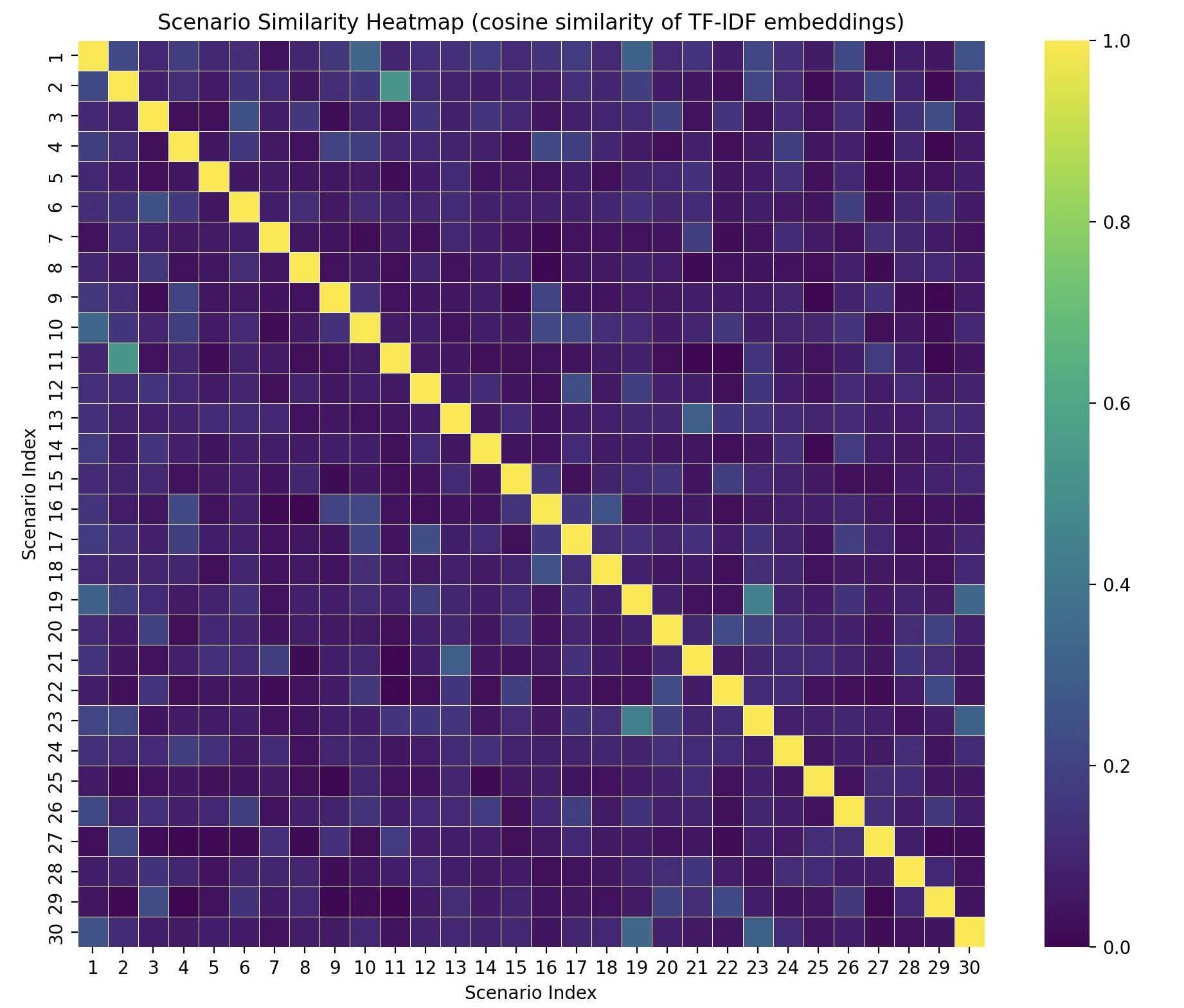}
    \caption{The cosine-similarity heatmap of TF-IDF embeddings offers a global overview of inter-scenario relationships, allowing for the detection of semantically coherent clusters that correspond to recurring service-failure themes.}
    \label{fig:synapse_chart3}
\end{figure}

\subsubsection{Discussion}
The combined quantitative and qualitative results strongly suggest that a hierarchical, memory-augmented agent architecture like Synapse is a highly promising approach for automating complex operational tasks in dynamic, real-world environments. The system's high performance, particularly in reasoning quality, indicates that the explicit modeling of different memory types (working, episodic, semantic) provides a robust foundation for intelligent decision-making. This provides empirical validation for the cognitive architectures being explored in the academic literature. Furthermore, the successful orchestration of specialized agents by a supervisor demonstrates the effectiveness of the HMAS paradigm for decomposing and solving complex problems. The Synapse framework, therefore, serves as a valuable proof-of-concept and a blueprint for developing next-generation autonomous operations management systems.

\subsubsection{Limitations}
For academic credibility and to guide future work, it is essential to acknowledge the limitations of this study:

\begin{itemize}[leftmargin=*]
    \item \textbf{Limited Scale:} The evaluation was conducted on 30 scenarios, which, while diverse and complex, represents a relatively small sample size. Larger-scale evaluations with hundreds or thousands of scenarios would provide more robust statistical validation.
    
    \item \textbf{Simulated Environment:} The current implementation operates in a controlled, simulated environment rather than a live production system. Real-world deployment would introduce additional complexities such as API failures, network latency, and unpredictable user behavior.
    
    \item \textbf{Evaluation Methodology:} While the LLM-as-a-Judge approach with bias mitigation represents current best practices, it is not equivalent to comprehensive human evaluation. Future work should include expert human evaluators to validate the LLM judge scores.
    
    \item \textbf{Memory Limitations:} The episodic and semantic memories, while effective, were populated with a finite dataset. The system's long-term learning capabilities and performance degradation (or improvement) over extended operational periods remain to be evaluated.
    
    \item \textbf{Cost and Latency:} The computational cost and response latency of the system were not systematically measured. For production deployment, these factors are critical and require optimization.
\end{itemize}

\section{Conclusion and Future Work}

This paper presented Project Synapse, a hierarchical multi-agent framework with hybrid memory architecture designed for autonomous resolution of last-mile delivery disruptions. The system demonstrates strong performance across multiple evaluation dimensions, with particular strength in reasoning quality (0.77) and overall effectiveness (0.73 average score). The explicit integration of working, episodic, and semantic memory enables context-aware, fact-grounded decision-making that mirrors human cognitive processes.

\subsection{Future Research Directions}

Several promising avenues for future research emerge from this work:

\begin{enumerate}[leftmargin=*]
    \item \textbf{Self-Evolving Hierarchies:} Implementing mechanisms for automatic agent specialization and topology evolution, similar to InfiAgent's approach, to enable the system to adapt to new types of disruptions without manual reconfiguration.
    
    \item \textbf{Reinforcement Learning Integration:} Incorporating RL-based planning to optimize multi-step action sequences and improve plan correctness scores through experience.
    
    \item \textbf{Multi-Modal Capabilities:} Extending the system to process visual evidence (damaged packages, traffic images) directly, rather than relying on textual descriptions.
    
    \item \textbf{Federated Learning Across Deployments:} Enabling multiple instances of Synapse across different geographical regions to share episodic memories while preserving privacy, creating a globally-informed yet locally-responsive system.
    
    \item \textbf{Production Deployment Study:} Conducting a longitudinal study of Synapse in a live operational environment to measure real-world impact on customer satisfaction, operational costs, and resolution time.
\end{enumerate}

\subsection{Broader Impact}

The Synapse framework demonstrates the potential of LLM-based multi-agent systems to transform operational management in high-stakes, dynamic environments. Beyond logistics, the architectural principles---hierarchical task decomposition, specialized agents, and hybrid memory---are applicable to domains such as customer service, healthcare triage, emergency response, and IT operations. As these systems mature, they promise to augment human decision-making, reduce operational costs, and improve service quality across numerous industries.

\section*{Acknowledgments}
The authors would like to thank the anonymous reviewers for their insightful feedback, which significantly improved the quality of this manuscript. We also acknowledge the contributions of the engineering team who supported the development of the MCP Toolkit infrastructure.

\section*{Data Availability}
The evaluation dataset and code for the Synapse framework will be made available upon publication at \url{https://github.com/project-synapse}.

% Bibliography
\RaggedRight

\end{document}